\definecolor{darkblue}{rgb}{0.1,0.1,0.4}
\newtheorem{lemma}{Lemma}
\newtheorem{proposition}{Proposition}
\newtheorem{definition}{Definition}
\newtheorem{example}{Example}
\newtheorem{property}{Property}
\newcommand{\myOmit}[1]{}
\newcommand{\FOCUS} {\mbox{\sc Focus}}
\newcommand{\FOCUSH} {\mbox{\sc SpringyFocus}}
\newcommand{\FOCUSW} {\mbox{\sc WeightedFocus}}
\newcommand{\SpringyFocus} {\mbox{\sc SpringyFocus}}
\newcommand{\FOCUSWH} {\mbox{\sc WeightedSpringyFocus}}
\newcommand{\AMONG} {\mbox{\sc Among}}
\newcommand{\REGULAR} {\mbox{\sc Regular}}
\newcommand{\len}{\mathit{len}}
\newcommand{\h}{\mathit{h}}
\newcommand{\kv}{\mathit{k}}
\newcommand{\numb}{\mathit{y_c}}
\newcommand{\fc}{\mathit{fc}}
\newcommand{\C}{\mathit{z_c}}
\newcommand{\cU}{\mathit{z_c^U}}
\newcommand{\Start}{\mathit{S}}
\newcommand{\End}{\mathit{E}}
\newcommand{\Taken}{\mathit{T}}
\newcommand{\LastTaken}{\mathit{last_c}}
\newcommand{\Zero}{\mathit{Z}}
 \gdef\xxxmark{%
   \expandafter\ifx\csname @mpargs\endcsname\relax 
     \expandafter\ifx\csname @captype\endcsname\relax 
       \marginpar{xxx}
     \else
       xxx 
     \fi
   \else
     xxx 
   \fi}
 \gdef\xxx{\@ifnextchar[\xxx@lab\xxx@nolab}
 \long\gdef\xxx@lab[#1]#2{{\bf [\xxxmark #2 ---{\sc #1}]}}
 \long\gdef\xxx@nolab#1{{\bf [\xxxmark #1]}}
\title{Three Generalizations of the FOCUS Constraint}
\author{Nina Narodytska\\ 
NICTA and UNSW \\ 
Sydney, Australia \\ 
ninan@cse.unsw.edu.au
\And Thierry Petit \\
LINA-CNRS \\
Mines-Nantes,  INRIA \\
Nantes, France \\
thierry.petit@mines-nantes.fr \\
 \And Mohamed Siala \\
LAAS-CNRS\\
Univ de Toulouse, INSA\\
Toulouse, France  \\
 msiala@laas.fr 
 \And Toby Walsh \\ 
 NICTA and UNSW \\ 
Sydney, Australia \\ 
toby.walsh@nicta.com.au
 }
\begin{document}
\maketitle
\begin{abstract}
The \FOCUS~constraint expresses the notion that solutions
are concentrated. In practice,
this constraint suffers from the rigidity of its semantics. To tackle this issue, we propose three
generalizations of the \FOCUS\ constraint. 
We provide for each one a complete filtering algorithm as well as
discussing decompositions.
\end{abstract}
\section{Introduction}
Many discrete optimization problems have
constraints on the objective function.
Being able to represent such constraints is fundamental to deal 
with many real world industrial problems.
Constraint programming is a promising approach to
express and filter such constraints. 
In particular, several constraints have been proposed for obtaining well-balanced solutions~\cite{pesreg05,schdevdupreg07,petreg11}.
Recently, the \FOCUS~constraint~\cite{petit12} was introduced to express the opposite notion. 
It captures the concept of concentrating the high values
in a sequence of variables 
to a small number of intervals. 
We recall its definition.
Throughout this paper, $X=[x_0, x_1, \ldots, x_{n-1}]$ is a sequence of variables and $s_{i,j}$ is a sequence of indices of consecutive variables in $X$,
such that $s_{i,j} = [i, i+1,\ldots, j]$, $0 \leq i \leq j <n$.  We let $|E|$ be the size of a collection $E$.
\begin{definition}[\cite{petit12}]\label{def:focus}
Let 
$\numb$ be a variable. Let $\kv$ and $\len$ be two integers,
$1 \leq \len \leq |X|$. An instantiation of $X \cup \{\numb\}$ satisfies \FOCUS($X,\numb,\len,\kv$)
iff there exists a set $S_X$ of \emph{disjoint} sequences of indices $s_{i,j}$ such that three conditions are all satisfied:
\emph{\bf(1)} $|S_X| \leq \numb$
\emph{\bf(2)}  $\forall x_l \in X$, $x_l>\kv \Leftrightarrow \exists s_{i,j} \in S_X$ such that $l \in s_{i,j}$
\emph{\bf(3)} $\forall s_{i,j} \in S_X$, $j-i+1 \leq \len$
\end{definition}

\FOCUS~can be used in various contexts including cumulative scheduling problems where some excesses of capacity can be tolerated to obtain a solution~\cite{petit12}.
In a cumulative scheduling problem, we are scheduling
activities, and each activity consumes a certain amount of some 
resource. The total quantity of the resource available
is limited by a capacity.  Excesses can 
be represented by variables~\cite{declercqal11}.
In practice, excesses might be tolerated 
by, for example, renting a new machine to produce more resource.
Suppose the rental price decreases proportionally to its duration: 
it is cheaper to rent a machine during a single interval than
to make several rentals. On the other hand, rental intervals 
have generally a maximum possible duration.
\FOCUS~can be set to concentrate (non null) excesses in a small number 
of intervals, each of length at most $\len$.

Unfortunately, the usefulness of \FOCUS~is hindered by the rigidity of 
its semantics. For example, 
we might be able to 
rent a machine from Monday to Sunday but not use it on Friday.
It is a pity to miss such a solution with a smaller number of 
rental intervals because \FOCUS~imposes that all the variables within each rental interval take a high value.
Moreover, a solution with one rental interval of two days is
better than a solution with a rental interval of four days. 
Unfortunately, \FOCUS~only considers
the \emph{number} of disjoint sequences,
and does not consider their \emph{length}.

We tackle those issues here by means of three generalizations of \FOCUS. \SpringyFocus~tolerates within each sequence in $s_{i,j} \in S_X$
some values $v \leq \kv$.  To keep the semantics of grouping high values, their number is limited in each $s_{i,j}$ by an
integer argument.  $\FOCUSW$~adds a variable to count the length of sequences, equal to the number
of variables taking a value $v > \kv$. The most generic one,  \FOCUSWH, combines the semantics of \SpringyFocus~and \FOCUSW.
Propagation of constraints like these complementary to an objective function is well-known to be important~\cite{petpod08,schvanreg09}.
We present and experiment with filtering algorithms and decompositions
therefore for each constraint.
\section{Springy FOCUS}\label{sec:focush}
In Definition~\ref{def:focus}, each sequence in $S_X$ contains \emph{exclusively}
values $v>k$. In many practical cases, this property is too strong.
\begin{figure}[!h]
\begin{center}
  \includegraphics[width=0.48\textwidth]{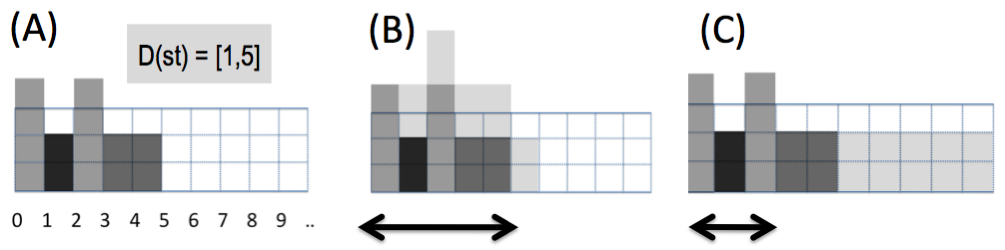}
\end{center}
\vspace{-0.3cm}
\caption{\footnotesize(A) Problem with 4 fixed activities
and one activity of length 5 that can start from time 1 to 5. (B) Solution satisfying
\FOCUS($X, [1,1],5,0$), with a new machine rented for 5 days. (C) Practical solution violating
\FOCUS($X, [1,1],5,0$), with a new machine rented for 3 days but not used on the second day.
}
\vspace{-0.3cm}
  \label{fig:SF}
\end{figure}
Consider one simple instance of the problem in the introduction, in Figure~\ref{fig:SF},
where one variable $x_i \in X$ is defined per point in time $i$ (e.g., one day), to represent excesses of capacity.
Inintialy, 4 activities are fixed and one activity $a$ remains
to be scheduled (drawing A), of duration 5
and that can start from day 1 to day 5.
If \FOCUS($X, \numb=1,5,0$) is imposed then $a$ must start at day $1$ (solution B). We have one $5$ day
rental interval. Assume now that the new machine may not be
used every day. Solution (C) gives one rental of $3$ days instead of $5$.
Furthermore, if $\len=4$ the problem will have no solution using \FOCUS, while this latter solution still exists in practice.
This is paradoxical, as relaxing
the condition 
that sequences in the set $S_X$ of Definition~\ref{def:focus} take only
values $v>k$ deteriorates the concentration power of the constraint. Therefore,
we propose a soft relaxation of \FOCUS, where \emph{at most} $\h$ values less
than $\kv$ are tolerated within each sequence in $S_X$.
\begin{definition} \label{def:SF}
Let $\numb$ be a variable and $\kv$, $\len$, $\h$ be three integers,
$1\leq\len\leq|X|$, $0$ $\leq \h<\len-1$. An instantiation of $X \cup \{\numb\}$ satisfies \SpringyFocus($X,\numb,\len,\h,\kv$) iff there exists
a set $S_X$ of \emph{disjoint} sequences of indices $s_{i,j}$ such that four conditions are all satisfied:
\emph{\bf(1)}
$|S_X| \leq \numb$
\emph{\bf(2)}
$\forall x_l \in X$, $x_l>\kv \Rightarrow \exists s_{i,j} \in S_X$ such that $l \in s_{i,j}$
\emph{\bf(3)}
$\forall s_{i,j} \in S_X$, $j-i+1 \leq \len$, $x_i>k$ and $x_j>k$.
\emph{\bf(4)}
$\forall s_{i,j} \in S_X$, $|\{ l \in s_{i,j}$, $x_l \leq \kv \} |$ $\leq \h$
\end{definition}

Bounds consistency (BC) on \SpringyFocus~is equivalent to domain consistency: any solution can be turned into a solution that only
uses the lower bound $\min(x_l)$ or the upper bound $\max(x_l)$ of the domain $D(x_l)$ of each $x_l \in X$
(this observation was made for \FOCUS~\cite{petit12}). Thus, we propose a BC algorithm.
The first step is to traverse $X$ from $x_0$ to $x_{n-1}$, to compute the minimum possible number of disjoint sequences in $S_X$ (a lower bound for $\numb$),
the \emph{focus cardinality}, denoted $\fc(X)$. We use the same notation for subsequences of $X$. $\fc(X)$ depends
on  $\kv, \len$ and $\h$.

\begin{definition}\label{def:recSF} Given $x_l \in X$, we consider three quantities.
(1) $\underline{p}(x_{l},v_\leq)$ is the focus cardinality of $[x_0, x_1, \ldots, x_l]$, assuming $x_l \leq \kv$, and $\forall s_{i,j} \in S_{[ x_0, x_1, \ldots, x_l ]}, j \neq l$.
(2) $\underline{p_S}(x_{l},v_\leq)$ is the focus cardinality of $[x_0, x_1, \ldots, x_l]$, assuming $x_l \leq \kv$ and $\exists i, s_{i,l} \in S_{[ x_0, x_1, \ldots, x_l ]}$.
(3) $\underline{p}(x_{l},v_>)$  is the focus cardinality of $[x_0, x_1, \ldots, x_l]$ assuming $x_l > \kv$.

Any quantity is equal to $n+1$ if the domain $D(x_l)$ of $x_l$ makes not possible the considered assumption.
\end{definition}
\begin{property}\label{prop:fc}
$\underline{p_S}(x_{0},v_\leq)=\underline{p_S}(x_{n-1},v_\leq)=n+1$, and $\fc(X)=\min(\underline{p}(x_{n-1},v_\leq), \underline{p}(x_{n-1},v_>))$.
\end{property}
\begin{proof} By construction from Definitions~\ref{def:SF} and~\ref{def:recSF}.
\end{proof}

To compute the quantities of Definition~\ref{def:recSF} for $x_l$$\in$$X$ we use
$\underline{\mathit{plen}}(x_{l})$, the minimum length of a sequence in $S_{[ x_0, x_1, \ldots, x_l ]}$ containing
$x_l$ among instantiations of $[ x_0, x_1, \ldots, x_l ]$ where the number of sequences is $\fc([x_0,x_1,\ldots, x_l])$.
$\underline{\mathit{plen}}(x_{l})$$=$$0$ if $\forall s_{i,j} \in S_{[ x_0, x_1, \ldots, x_l ]}, j \neq l$.
$\underline{\mathit{card}}(x_{l})$ is the minimum number of values $v \leq k$ in the current sequence in $S_{[ x_0, x_1, \ldots, x_l ]}$, equal to $0$ if $\forall s_{i,j} \in S_{[ x_0, x_1, \ldots, x_l ]}, j \neq l$.
$\underline{\mathit{card}}(x_{l})$ assumes that $x_l >k$. It has to be decreased it by one if $x_l \leq k$.
For sake of space, proofs of next 
lemmas are given in Appendix. 

\begin{lemma}[initialization] \label{lem:initialization}
$\underline{p}(x_{0},v_\leq)= 0$ if $\min(x_0) \leq k$, and $n+1$ otherwise;
$\underline{p_{S}}(x_{0},v_\leq)=n+1$;
$\underline{p}(x_{0},v_>)$ $=$ $1$ if $\max(x_0)>k$ and $n+1$ otherwise;
$\underline{\mathit{plen}}(x_{0})$ $=$ $1$ if $\max(x_0)>k$ and $0$ otherwise;
$\underline{\mathit{card}}(x_{0})$ $=$ $0$.
\end{lemma}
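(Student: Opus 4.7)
The plan is to verify all five claims by direct case analysis on the domain $D(x_0)$, since the lemma concerns only the singleton prefix $[x_0]$ and no recursion is required. The convention in Definition~\ref{def:recSF} that each quantity equals $n+1$ whenever the conditioning assumption is inconsistent (either with $D(x_0)$ or with the conditions of Definition~\ref{def:SF}) handles all infeasible cases uniformly, so the real work reduces to exhibiting a witnessing $S_{[x_0]}$ whenever the assumption is feasible.

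First I would handle $\underline{p}(x_0, v_\leq)$: the assumption $x_0 \leq \kv$ is realizable iff $\min(x_0) \leq \kv$; when it is, condition (2) of Definition~\ref{def:SF} is vacuous on the singleton $[x_0]$, so $S_{[x_0]} = \emptyset$ satisfies all four conditions and yields focus cardinality $0$. The case $\underline{p}(x_0, v_>)$ is symmetric: if $\max(x_0) > \kv$, condition (2) forces $x_0$ to belong to a sequence, the minimal choice being $s_{0,0}$, which satisfies conditions (3) and (4) trivially (length $1 \leq \len$ since $\len \geq 1$, no value $\leq \kv$ in the sequence), yielding cardinality $1$; otherwise the assumption is infeasible. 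The values of $\underline{\mathit{plen}}(x_0)$ and $\underline{\mathit{card}}(x_0)$ are then read off from this same optimal witness: when $\max(x_0) > \kv$ the sole sequence is $s_{0,0}$, of length $1$ and containing zero values $\leq \kv$; when $\max(x_0) \leq \kv$ no sequence contains $x_0$, so by the stated conventions $\underline{\mathit{plen}}(x_0) = 0$ and $\underline{\mathit{card}}(x_0) = 0$.

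The only genuinely delicate point, which I expect to be the main subtlety to pin down, is $\underline{p_S}(x_0, v_\leq) = n+1$: this must be argued to hold independently of $D(x_0)$. The combined assumption is that $x_0 \leq \kv$ and that some sequence $s_{i,0} \in S_{[x_0]}$ ends at index $0$; but condition (3) of Definition~\ref{def:SF} requires $x_j > \kv$ for the right endpoint of every sequence, directly contradicting $x_0 \leq \kv$. Hence the assumption is infeasible, and the $n+1$ convention applies, closing the case analysis.
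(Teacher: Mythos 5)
Your proof is correct and takes essentially the same approach as the paper's own (very terse) argument: exhibit the trivial witnesses $\emptyset$ and $\{s_{0,0}\}$ for the feasible cases and invoke the endpoint condition of Definition~2 to rule out a sequence ending at index $0$ with $x_0 \leq \kv$, forcing $\underline{p_S}(x_0,v_\leq)=n+1$. (Incidentally, you correctly attribute the endpoint requirement to condition~(3), whereas the paper's appendix cites item~4; yours is the accurate reference.)
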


\begin{lemma}[$\underline{p}(x_{l},v_\leq)$]\label{lem:pleq}
If $\min(x_l) \leq k$ then
$\underline{p}(x_{l},v_\leq)$ $=$ $\min(\underline{p}(x_{l-1},v_\leq), \underline{p}(x_{l-1},v_>))$, else $\underline{p}(x_{l},v_\leq)$ $=$ $n+1$.
\end{lemma}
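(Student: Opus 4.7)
The plan is to verify both directions of the equality by case analysis on the status of $x_{l-1}$ in an optimal witness.

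First, if $\min(x_l)>k$, then the assumption $x_l\leq k$ defining $\underline{p}(x_l,v_\leq)$ is infeasible, so by the convention in Definition~\ref{def:recSF} the value is $n+1$; this handles the \emph{else} branch immediately.

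Now suppose $\min(x_l)\leq k$. I would first prove $\underline{p}(x_l,v_\leq)\leq \min(\underline{p}(x_{l-1},v_\leq),\underline{p}(x_{l-1},v_>))$ by taking any optimal instantiation of $[x_0,\ldots,x_{l-1}]$ witnessing either quantity, extending it by fixing $x_l$ to some value $\leq k$ (which exists since $\min(x_l)\leq k$), and keeping the same set $S_X$. Since $x_l$ is not added to any sequence, condition (2) of Definition~\ref{def:SF} is preserved (whether the state on the left corresponds to $x_{l-1}\leq k$ with $x_{l-1}$ outside all sequences, or to $x_{l-1}>k$ where the sequence containing $x_{l-1}$ simply ends at $l-1$), and conditions (1), (3), (4) are unaffected. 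The focus cardinality does not change, giving the desired upper bound.

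For the reverse inequality, I would take an optimal $S_X$ witnessing $\underline{p}(x_l,v_\leq)$ and split on $x_{l-1}$. If $x_{l-1}>k$, condition (2) forces $x_{l-1}$ to lie in some $s_{i,j}\in S_X$; since by assumption no sequence contains $x_l$, necessarily $j=l-1$, and restricting $S_X$ to $[x_0,\ldots,x_{l-1}]$ is a valid witness for $\underline{p}(x_{l-1},v_>)$ with the same cardinality. If $x_{l-1}\leq k$, the one subtle case is ruling out that $x_{l-1}$ lies in some sequence $s_{i,l-1}\in S_X$: this would force $j=l-1$, but condition~(3) of Definition~\ref{def:SF} requires $x_j>k$, a contradiction; so $x_{l-1}$ is outside every sequence, and $S_X$ is a valid witness for $\underline{p}(x_{l-1},v_\leq)$. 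Either way the cardinality is at least the corresponding quantity, so $\underline{p}(x_l,v_\leq)\geq \min(\underline{p}(x_{l-1},v_\leq),\underline{p}(x_{l-1},v_>))$.

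The only delicate step is the case $x_{l-1}\leq k$ with $x_{l-1}$ in a sequence: this is where condition (3) of Definition~\ref{def:SF} (endpoints strictly greater than $k$) is essential to exclude the spurious transition from $\underline{p_S}(x_{l-1},v_\leq)$. Everything else is bookkeeping of the set $S_X$ across the restriction/extension between $[x_0,\ldots,x_{l-1}]$ and $[x_0,\ldots,x_l]$.
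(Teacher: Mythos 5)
Your proof is correct and follows essentially the same route as the paper's: the crux in both is that the predecessor state $\underline{p_S}(x_{l-1},v_\leq)$ must be excluded because a sequence forced to end at $x_{l-1}$ with a value $\leq k$ violates the endpoint requirement of condition (3) of Definition~\ref{def:SF}, leaving only the two states $\underline{p}(x_{l-1},v_\leq)$ and $\underline{p}(x_{l-1},v_>)$ in the minimum. You merely spell out the two inequalities that the paper compresses into a two-line argument.
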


\begin{lemma}[$\underline{p_S}(x_{l},v_\leq)$]\label{lem:pleqs}
If $\min(x_i)$$>$$k$, $\underline{p_S}(x_{i},v_\leq)$$=$$n+1$. \\
Otherwise,
if $\underline{\mathit{plen}}(x_{i-1})$ $\in$ $\{0, \len-1, \len \}$ $\vee$ $\underline{\mathit{card}}(x_{i-1})$ $=$ $h$ then $\underline{p_S}(x_{i},v_\leq)$ $=$ $n+1$,
else $\underline{p_S}(x_{i},v_\leq)$ $=$ $\min(\underline{p_S}(x_{i-1},v_\leq), \underline{p}(x_{i-1},v_>))$.
\end{lemma}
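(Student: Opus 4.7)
The plan is a case analysis driven directly by Definitions~\ref{def:SF} and~\ref{def:recSF}. First, I would dispose of the trivial case $\min(x_i)>k$: the assumption $x_i\leq k$ in the definition of $\underline{p_S}(x_i,v_\leq)$ is then infeasible for every instantiation of $x_i$, so by the convention at the end of Definition~\ref{def:recSF}, $\underline{p_S}(x_i,v_\leq)=n+1$.

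For the main case $\min(x_i)\leq k$, the idea is to relate the state at index $i$ to the state at index $i-1$. Fix an instantiation witnessing $\underline{p_S}(x_i,v_\leq)$, with some $s_{j,i}\in S_{[x_0,\ldots,x_i]}$. Condition~(3) of Definition~\ref{def:SF} requires $x_j>k$, but $x_i\leq k$, so $j<i$ and therefore $i-1\in s_{j,i}$: the variable $x_{i-1}$ lies in the same open sequence. I would then split on the value of $x_{i-1}$. If $x_{i-1}\leq k$, then $i-1$ is the current right endpoint of that same sequence in the shorter prefix, matching $\underline{p_S}(x_{i-1},v_\leq)$. If $x_{i-1}>k$, condition~(2) forces $x_{i-1}$ into some sequence that, viewed in $[x_0,\ldots,x_{i-1}]$, must end at $i-1$, matching $\underline{p}(x_{i-1},v_>)$. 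Appending $x_i$ then extends an existing sequence without creating a new one, so the count is bounded above by $\min(\underline{p_S}(x_{i-1},v_\leq),\,\underline{p}(x_{i-1},v_>))$, with equality obtained by performing the same extension from an optimal instantiation of the shorter prefix.

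The remaining step is to check that the guard producing $n+1$ captures exactly when this extension is infeasible. For the cardinality, adding $x_i\leq k$ increments the sequence's count of tolerated values, so condition~(4) forces $\underline{\mathit{card}}(x_{i-1})+1\leq h$; hence $\underline{\mathit{card}}(x_{i-1})=h$ is infeasible. For the length, condition~(3) bounds the sequence by $\len$ \emph{and} requires a $>k$ value at its right endpoint. Since $x_i\leq k$, the extended sequence cannot yet be closed and must be continued rightward with at least one further $>k$ position, forcing its length after extension to be \emph{strictly} less than $\len$; this rules out $\underline{\mathit{plen}}(x_{i-1})\in\{\len-1,\len\}$. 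The residual case $\underline{\mathit{plen}}(x_{i-1})=0$ says that no open sequence exists at $i-1$ to extend, and opening a brand-new sequence at $i$ again violates the starting endpoint requirement since $x_i\leq k$. The main subtlety I anticipate is precisely this strict-inequality length argument, which requires recognising that a sequence ending in a $\leq k$ value cannot be closed there and so still needs room on the right; the rest is straightforward bookkeeping against the four conditions of Definition~\ref{def:SF}.
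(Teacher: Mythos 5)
Your proof is correct and follows essentially the same route as the paper's: a case analysis against conditions (3) and (4) of Definition~\ref{def:SF}, splitting on $\underline{\mathit{plen}}(x_{i-1})$ and $\underline{\mathit{card}}(x_{i-1})$ to justify the $n+1$ guard, and otherwise reducing to $\min(\underline{p_S}(x_{i-1},v_\leq),\underline{p}(x_{i-1},v_>))$ by noting that $x_{i-1}$ must lie in the same open sequence while $\underline{p}(x_{i-1},v_\leq)$ is excluded. You are in fact slightly more explicit than the paper, which omits the $\underline{\mathit{card}}(x_{i-1})=h$ and $\min(x_i)>k$ cases as obvious defaults.
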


\begin{lemma}[$\underline{p}(x_{l},v_>)$]\label{lem:pgt}
If $\max(x_l) \leq k$ then
$\underline{p}(x_{l},v_>)$$=$$n+1$. \\
Otherwise,
If $\underline{\mathit{plen}}(x_{l-1})$ $\in$ $\{0, \len\}$, $\underline{p}(x_{l},v_>)$ $=$ $\min(\underline{p}(x_{l-1},v_>)+1, \underline{p}(x_{l-1},v_\leq) +1)$,
else $\underline{p}(x_{l},v_>)$ $=$ $\min(\underline{p}(x_{l-1},v_>), \underline{p_S}(x_{l-1},v_\leq), \underline{p}(x_{l-1},v_\leq) +1)$.
\end{lemma}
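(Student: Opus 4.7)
The plan is a case analysis on how $x_l$ participates in the witness set $S$. First dispose of the trivial case: if $\max(x_l) \leq \kv$ then $x_l > \kv$ is infeasible, so $\underline{p}(x_l, v_>) = n+1$ by the convention at the end of Definition~\ref{def:recSF}.

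Now suppose $\max(x_l) > \kv$, and consider any witness $S$ for $[x_0,\ldots,x_l]$ with $x_l > \kv$. By condition (2) of Definition~\ref{def:SF}, $x_l$ lies in some sequence $s_{i,l} \in S$ (the right endpoint must be $l$ since we are restricted to the prefix $[x_0,\ldots,x_l]$, and $x_l > \kv$ satisfies the right-endpoint requirement of condition (3)). I would split on $i$. In the \emph{opening} sub-case ($i=l$) the new sequence has length $1$, the remainder of $S$ forms a witness for $[x_0,\ldots,x_{l-1}]$, and the count equals $\fc([x_0,\ldots,x_{l-1}]) + 1$; by Property~\ref{prop:fc} applied to the prefix this rewrites as $\min(\underline{p}(x_{l-1},v_\leq), \underline{p}(x_{l-1},v_>)) + 1$. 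Note that $\underline{p_S}(x_{l-1},v_\leq) + 1$ is deliberately absent, because closing an open sequence at $x_{l-1} \leq \kv$ would violate condition (3). In the \emph{extending} sub-case ($i<l$), $x_{l-1}$ belongs to the same sequence as $x_l$, hence sits inside an open sequence, giving either $\underline{p}(x_{l-1},v_>)$ or $\underline{p_S}(x_{l-1},v_\leq)$, and the count does not change. Extension is feasible iff the current sequence can still grow by one, i.e., $\underline{\mathit{plen}}(x_{l-1}) \in [1,\len-1]$.

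Combining the two sub-cases finishes the proof. When $\underline{\mathit{plen}}(x_{l-1}) \in \{0,\len\}$ extension is ruled out and only the opening expression survives, matching the first branch of the lemma. Otherwise both sub-cases apply, and the opening candidate $\underline{p}(x_{l-1},v_>) + 1$ is dominated by the extension value $\underline{p}(x_{l-1},v_>)$ and drops out of the minimum, leaving exactly $\min(\underline{p}(x_{l-1},v_>), \underline{p_S}(x_{l-1},v_\leq), \underline{p}(x_{l-1},v_\leq)+1)$ as claimed. The main subtlety, and the step I expect to require the most care, is tying the single quantity $\underline{\mathit{plen}}(x_{l-1})$ to the three predecessor states used on the right-hand side: I would need an exchange argument showing that any optimal $S$ at position $l$ can be obtained by appending exactly one of \emph{open} or \emph{extend} to an optimal $S'$ at position $l-1$ whose plen matches the declared case, and that the DP's choice of $\underline{\mathit{plen}}$ and $\underline{\mathit{card}}$ at $l-1$ is preserved consistently (invoking, in particular, that condition (4) is not activated at $l$ since $x_l > \kv$ does not increase the $\leq \kv$ count).
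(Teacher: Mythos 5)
Your proof is correct and follows essentially the same route as the paper's: a case split between opening a new sequence (contributing $\underline{p}(x_{l-1},v_>)+1$ and $\underline{p}(x_{l-1},v_\leq)+1$ via Property~\ref{prop:fc}) and extending the sequence containing $x_{l-1}$ (contributing $\underline{p}(x_{l-1},v_>)$ and $\underline{p_S}(x_{l-1},v_\leq)$), with $\underline{\mathit{plen}}(x_{l-1})\in\{0,\len\}$ ruling out the latter; your observation that the dominated term $\underline{p}(x_{l-1},v_>)+1$ drops from the minimum, and your remark that condition (4) is not triggered since $x_l>\kv$, fill in details the paper leaves implicit.
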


\begin{proposition}[$\underline{\mathit{plen}}(x_{l})$]\label{prop:plen}
(by construction) If $\min$ $(\underline{p_S}(x_{l-1},v_\leq),$$\underline{p}(x_{l-1},v_>))$$<$$\underline{p}(x_{l-1},$$v_\leq)$$+$$1$$\wedge$$\underline{\mathit{plen}}$$(x_{l-1})$$<\len$ then
$\underline{\mathit{plen}}(x_{l}) = \underline{\mathit{plen}}(x_{l-1})  +1$.
Otherwise,  if $\underline{p}(x_{l},v_>)) <  n+1$ then $\underline{\mathit{plen}}(x_{l}) = 1$,
else $\underline{\mathit{plen}}(x_{l}) = 0$.
\end{proposition}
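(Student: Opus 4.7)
The plan is to verify the recurrence by matching each branch to a choice made in the inductive construction of the $\underline{p}(\cdot)$ values in Lemmas~\ref{lem:pleq}--\ref{lem:pgt}. Since $\underline{\mathit{plen}}(x_{l})$ records, in the best instantiation of $[x_{0},\ldots,x_{l}]$, the length of the (possibly empty) sequence of $S$ whose right endpoint is $l$, I will read off its value directly from which min-branch is selected.

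First I would classify the three possible placements of position $l$ in any optimal instantiation: \emph{(i)} extend the rightmost sequence ending at $l-1$ to include $l$, which requires $\underline{\mathit{plen}}(x_{l-1})\in\{1,\ldots,\len-1\}$ and costs $\underline{p_S}(x_{l-1},v_{\leq})$ (if $x_{l}\leq k$ and we continue with an interior position) or one of $\underline{p_S}(x_{l-1},v_{\leq}),\underline{p}(x_{l-1},v_{>})$ (if $x_{l}>k$); \emph{(ii)} open a brand-new sequence at $l$, which forces $x_{l}>k$ and contributes $\underline{p}(x_{l-1},v_{\leq})+1$; \emph{(iii)} leave $l$ outside every sequence, which forces $\min(x_{l})\leq k$ and inherits $\min(\underline{p}(x_{l-1},v_{\leq}),\underline{p}(x_{l-1},v_{>}))$ without incrementing. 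In the three sub-cases $\underline{\mathit{plen}}(x_{l})$ equals, by definition, $\underline{\mathit{plen}}(x_{l-1})+1$, $1$, and $0$ respectively.

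Next I would show that the guard
\[
  \min\bigl(\underline{p_S}(x_{l-1},v_{\leq}),\,\underline{p}(x_{l-1},v_{>})\bigr)\;<\;\underline{p}(x_{l-1},v_{\leq})+1
\]
is exactly the criterion under which case \emph{(i)} dominates case \emph{(ii)}: the left-hand side is the cost of keeping the sequence open at $l$, while the right-hand side is the cost of starting a fresh sequence at $l$. Combined with $\underline{\mathit{plen}}(x_{l-1})<\len$, which Lemma~\ref{lem:pgt} already requires for extension to be legal (otherwise the sequence is full and $\underline{\mathit{plen}}(x_{l-1})\in\{0,\len\}$ forces a restart), the guard picks exactly the branch of the $\min$ in Lemma~\ref{lem:pgt} and in Lemma~\ref{lem:pleqs} that continues the sequence, yielding $\underline{\mathit{plen}}(x_{l})=\underline{\mathit{plen}}(x_{l-1})+1$. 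When the guard fails, case \emph{(i)} is no longer optimal or not legal, so the algorithm either opens a new sequence (case \emph{(ii)}), which is possible iff $\underline{p}(x_{l},v_{>})<n+1$ (Lemma~\ref{lem:pgt} characterizes this by $\max(x_{l})>k$ together with feasibility of the prefix), giving $\underline{\mathit{plen}}(x_{l})=1$; or, failing that, resorts to case \emph{(iii)} with $\underline{\mathit{plen}}(x_{l})=0$.

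The main obstacle is the bookkeeping around ties and boundary values: I must check that the exact minimizer chosen by Lemma~\ref{lem:pgt} is the one that also matches the above case split, and that the boundary values $\underline{\mathit{plen}}(x_{l-1})\in\{0,\len\}$ do not sneak an extension into the formula (they cannot, because the guard uses $\underline{\mathit{plen}}(x_{l-1})<\len$, and when $\underline{\mathit{plen}}(x_{l-1})=0$ we have $\underline{p_S}(x_{l-1},v_{\leq})=n+1$, so the extension branch is automatically ruled out unless $\underline{p}(x_{l-1},v_{>})<\underline{p}(x_{l-1},v_{\leq})+1$, in which case $\underline{\mathit{plen}}(x_{l-1})$ was in fact nonzero, contradiction). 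Once these boundary checks go through, the statement follows directly from the construction of the $\underline{p}$ values.
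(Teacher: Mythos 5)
The paper gives no proof of this proposition beyond the annotation ``(by construction)'', so your reconstruction is, in effect, the intended argument: match each branch of the recurrence to the three placements of position $l$ (extend the open sequence, open a new one, leave $l$ uncovered), and read the guard as comparing the cardinality of instantiations that keep a sequence open through $x_{l-1}$ against those that close it and pay $+1$ for a fresh sequence at $l$. That reading and your case split are sound. One parenthetical claim in your boundary check is false, however: $\underline{\mathit{plen}}(x_{l-1})=0$ does not imply $\underline{p_S}(x_{l-1},v_\leq)=n+1$. By Lemma~\ref{lem:pleqs} that value is $n+1$ only when $\min(x_{l-1})>k$, or $\underline{\mathit{plen}}(x_{l-2})\in\{0,\len-1,\len\}$, or the card bound is saturated; one can have $\underline{\mathit{plen}}(x_{l-1})=0$ while $\underline{p_S}(x_{l-1},v_\leq)$ is finite (e.g.\ $\max(x_{l-1})\leq k$ with an extendable open sequence at $l-2$). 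The conclusion you want still holds, but for a different reason: $\underline{\mathit{plen}}(x_{l-1})=0$ forces $\underline{p}(x_{l-1},v_>)=n+1$ together with failure of the guard at step $l-1$, and these jointly yield $\min(\underline{p_S}(x_{l-1},v_\leq),\underline{p}(x_{l-1},v_>))\geq \underline{p}(x_{l-1},v_\leq)+1$, so the extension branch cannot fire at step $l$. You should also say a word about ties, which your sketch only gestures at: when the guard fails purely by equality, the rule deliberately prefers opening a new sequence, and this is safe because a shorter last sequence (with reset card counter) dominates a longer one for every possible future extension.
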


\begin{proposition}[$\underline{\mathit{card}}(x_{l})$]\label{prop:card} (by construction)
If $\underline{\mathit{plen}}(x_{l})=1$ then $\underline{\mathit{card}}(x_{l})$ $=$ $0$.
Otherwise,
if $\underline{p}(x_{l},v_>)=n+1$ then $\underline{\mathit{card}}(x_{l})$ $=$ $\underline{\mathit{card}}(x_{l-1}) + 1$,
else $\underline{\mathit{card}}(x_{l})$ $=$ $\underline{\mathit{card}}(x_{l-1})$.
\end{proposition}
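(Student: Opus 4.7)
The plan is to proceed by induction on $l$ and verify the three branches of the recurrence in turn, using the inductively correct values of $\underline{p}(x_{l-1},v_\leq)$, $\underline{p_S}(x_{l-1},v_\leq)$, $\underline{p}(x_{l-1},v_>)$, $\underline{\mathit{plen}}(x_{l-1})$, and $\underline{\mathit{card}}(x_{l-1})$. Throughout, I fix an optimal instantiation of $[x_0,\dots,x_l]$ realising $\fc$, then minimum $\underline{\mathit{plen}}(x_l)$, then minimum $\underline{\mathit{card}}(x_l)$, and track how the ``current sequence'' ending at $l$ evolves from that at $l-1$.

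If $\underline{\mathit{plen}}(x_l)=1$, then by Proposition~\ref{prop:plen} the optimal instantiation opens a fresh sequence at position $l$, so the current sequence ending at $x_l$ is the singleton $\{l\}$. Condition (3) of Definition~\ref{def:SF} forces $x_l>k$ at both endpoints, so this sequence contains no value $v\leq k$ and $\underline{\mathit{card}}(x_l)=0$.

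If $\underline{\mathit{plen}}(x_l)>1$, then by the first branch of Proposition~\ref{prop:plen} the optimum extends the previous sequence at $x_{l-1}$ by appending $x_l$; hence the current sequence at $l$ is the one at $l-1$ plus the element $x_l$. When $\underline{p}(x_l,v_>)=n{+}1$, Lemma~\ref{lem:pgt} implies $\max(x_l)\leq k$, so $x_l\leq k$ is forced, and the count of sub-$k$ entries inside the current sequence grows by one: $\underline{\mathit{card}}(x_l)=\underline{\mathit{card}}(x_{l-1})+1$. When $\underline{p}(x_l,v_>)<n{+}1$, the value $x_l>k$ is realisable and setting it leaves both $\fc$ and $\underline{\mathit{plen}}$ unchanged; since the lexicographic tie-break minimises $\underline{\mathit{card}}$, this is the chosen extension and the count is preserved: $\underline{\mathit{card}}(x_l)=\underline{\mathit{card}}(x_{l-1})$.

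The residual case $\underline{\mathit{plen}}(x_l)=0$ is where the statement looks suspicious: no sequence ends at $l$, so the definition of $\underline{\mathit{card}}$ sets it to $0$, yet Proposition~\ref{prop:plen} shows that this case forces $\underline{p}(x_l,v_>)=n{+}1$, so the formula of Proposition~\ref{prop:card} formally returns $\underline{\mathit{card}}(x_{l-1})+1$. This discrepancy is benign because $\underline{\mathit{card}}(x_{i-1})$ is consumed only inside the test $\underline{\mathit{card}}(x_{i-1})=h$ of Lemma~\ref{lem:pleqs}, which is reached only when $\underline{\mathit{plen}}(x_{i-1})\notin\{0,\len-1,\len\}$; the stored value is therefore never inspected in the $\underline{\mathit{plen}}=0$ case and its nominal value is a ``don't care''. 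I expect the main obstacle to be precisely this bookkeeping: one must make the don't-care convention explicit, and in the sub-case $\underline{p}(x_l,v_>)<n{+}1$ argue carefully that selecting $x_l>k$ is compatible with the lexicographic optimality (minimum $\fc$, then $\underline{\mathit{plen}}$, then $\underline{\mathit{card}}$) that implicitly underlies all of these quantities.
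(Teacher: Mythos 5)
Your proof is correct and takes essentially the same route as the paper, which justifies Proposition~\ref{prop:card} only by the words ``by construction'': your case analysis (new sequence at $l$ gives $\underline{\mathit{card}}=0$; forced $x_l\leq k$ increments the count; otherwise the convention that $\underline{\mathit{card}}$ assumes $x_l>k$ keeps it unchanged) is just the explicit unpacking of that construction from Definition~\ref{def:SF}, Definition~\ref{def:recSF} and Proposition~\ref{prop:plen}. Your additional observation that the $\underline{\mathit{plen}}(x_l)=0$ branch formally returns $\underline{\mathit{card}}(x_{l-1})+1$ rather than the definitional $0$, and that this value is never consulted where it matters (the guard in Lemma~\ref{lem:pleqs} already short-circuits on $\underline{\mathit{plen}}(x_{i-1})=0$), is a legitimate refinement the paper leaves implicit.
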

{
\begin{algorithm}[!h]\label{alg:mincard}
\scriptsize
\caption{\scriptsize{\sc MinCards}($X, \len, \kv, \h$): Integer matrix} \label{alg:minCard}
$\mathit{pre}$ $:=$ new Integer$[|X|][4][]$ \;
\For{$l \in 0..n-1$}{
  $\mathit{pre}[l][0]$ $:=$ new Integer$[2]$\;
 \lFor{$j \in 1..3$}{
   $\mathit{pre}[l][j]$ $:=$ new Integer$[1]$\;
   }
   }
Initialization Lemma~\ref{lem:initialization},\;
\lFor{$l \in 1..n-1$}{
	Lemmas~\ref{lem:pleq}, ~\ref{lem:pleqs}, ~\ref{lem:pgt} and Propositions ~\ref{prop:plen} and ~\ref{prop:card}.\;
}
return $\mathit{pre}$\;
\end{algorithm}
}

Algorithm~\ref{alg:mincard} implements the lemmas with
$pre[l][0][0]=\underline{p}(x_{l},v_\leq)$,
$pre[l][0][1]=\underline{p_{S}}(x_{l},v_\leq)$,
$pre[l][1]=\underline{p}(x_{l},v_>)$,
$pre[l][2]=\underline{\mathit{plen}}(x_{l})$,
$pre[l][3]=\underline{\mathit{card}}(x_{l})$.

The principle of Algorithm~\ref{alg:gac} is the following. First, $lb=fc(X)$ is computed with $x_{n-1}$.
We execute Algorithm~\ref{alg:mincard} from $x_0$
to $x_{n-1}$ and conversely (arrays $\mathit{pre}$ and $\mathit{suf}$). We thus have for each quantity two values for each variable $x_l$.
To aggregate them, we implement regret mechanisms directly derived from Propositions~\ref{prop:card} and~\ref{prop:plen}, according to
the parameters $\len$ and $\h$.
{
\begin{algorithm}[!t]
\scriptsize
\caption{\scriptsize{\sc Filtering}($X, \numb, \len, \kv, \h$): Set of variables} \label{alg:gac}
$\mathit{pre}$ $:=$ {\sc MinCards}$(X,\len,k,h)$ \;
Integer $lb$ $:=$ $\min(\mathit{pre}[n-1][0][0], \mathit{pre}[n-1][1])$\;
\lIf{$\min(\numb)<lb$}{
  $D(\numb)$ $:=$ $D(\numb) \setminus [\min(\numb), lb[$\;
}
\If{$\min(\numb) = \max(\numb)$}{
  $\mathit{suf}$ $:=$  {\sc MinCards}$([x_{n-1}, x_{n-2}, \ldots, x_0],\len,k,h)$ \;
 \For{$l \in 0..n-1$}{
   	 \If{ $\mathit{pre}[l][0][0] + \mathit{suf}[n-1-l][0][0] > \max(\numb)$} {
		 Integer $\mathit{regret}$ $:=$ $0$;
		 Integer $add$ $:=$ $0$\;
		 \lIf{ $\mathit{pre}[l][1] \leq \mathit{pre}[l][0][1]$}{
		 	$add$ $:=$ $add$ $+$ $1$\;
		 }
		 \lIf{ $\mathit{suf}[n-1-l][1] \leq \mathit{suf}[n-1-l][0][1]$}{
		 	$add$$:=$$add$$+$$1$\;
		 }
    	  	\lIf{$\mathit{pre}[l][2] + \mathit{suf}[n-1-l][2] -1 \leq \len$ $\wedge$ $\mathit{pre}[l][3] + \mathit{suf}[n-1-l][3] + add -1 \leq h$}{
    			$\mathit{regret}$ $:=$ $1$\;
    		}
		\lIf{$\mathit{pre}[l][0][1] + \mathit{suf}[n-1-l][0][1] - regret > \max(\numb)$}{	
  			$D(x_i)$ $:=$ $D(x_i) \setminus$ $[\min(x_i), k]$\;
		}
		}
     		Integer $\mathit{regret}$ $:=$ $0$\;
       		\lIf{$\mathit{pre}[l][2] + \mathit{suf}[n-1-l][2] -1 \leq \len$ $\wedge$ $\mathit{pre}[l][3] + \mathit{suf}[n-1-l][3] -1$ 
		       $\leq h$}{
    			$\mathit{regret}$ $:=$ $1$\;
    		}
    		\If{$\mathit{pre}[l][1] + \mathit{suf}[n-1-l][1] - regret > \max(\numb)$}{
       			$D(x_i)$ $:=$ $D(x_i) \setminus$  $]k, \max(x_i)]$\;
    		}
	
}
}
return $X$ $\cup$ $\{ \numb \}$\;
\end{algorithm}
}
Line 4 is optional but it avoids some work when the
variable $\numb$ is fixed, thanks to the same property as \FOCUS~(see \cite{petit12}).  Algorithm~\ref{alg:gac} performs a constant number of
traversals of the set $X$. Its time complexity is $O(n)$, which is optimal.

\section{Weighted FOCUS}\label{sec:focusw}
We present $\FOCUSW$, that extends $\FOCUS$ with a variable $\C$
limiting the the sum of lengths of all the sequences in $S_X$, i.e., the number of variables \emph{covered} by a
sequence in $S_X$. It distinguishes between solutions that are equivalent
 with respect to the number of sequences in $S_X$ but not with respect to their length, as Figure~\ref{fig:WF} shows.
 

\begin{definition} \label{def:focus_basic_cost}
Let
$\numb$ and $\C$ be two integer variables and $\kv$, $\len$ be two integers,
such that $1 \leq \len \leq |X|$. An instantiation of $X \cup \{\numb\} \cup \{\C\}$ satisfies \FOCUSW($X,\numb,\len,\kv, \C$)
iff there exists a set $S_X$ of \emph{disjoint} sequences of indices $s_{i,j}$ such that four conditions are all satisfied:
\emph{\bf(1)}
$|S_X| \leq \numb$ \label{cond:def:card}
\emph{\bf(2)}
$\forall x_l \in X$, $x_l>\kv \Leftrightarrow \exists s_{i,j} \in S_X$ such that $l \in s_{i,j}$ \label{cond:def:cover}
\emph{\bf(3)}
$\forall s_{i,j} \in S_X$, $j-i+1 \leq \len$  \label{cond:def:len}
\emph{\bf(4)}
$ \sum_{s_{i,j} \in S_X} |s_{i,j}| \leq \C$. \label{cond:def:cost}
\end{definition}
\begin{definition}[\cite{petit12}] Given an integer $\kv$, a variable $x_l \in X$ is:
Penalizing, $(P_\kv)$, iff $min(x_l) > k$.
Neutral, $(N_\kv)$, iff $max(x_l) \leq  k$.
Undetermined, $(U_\kv)$, otherwise.
We say $x_ l \in P_k$ iff $x_l$ is labeled $P_k$, and similarly for $U_k$ and $N_k$.
\end{definition}

\begin{figure}[!h]
\begin{center}
  \includegraphics[width=0.48\textwidth]{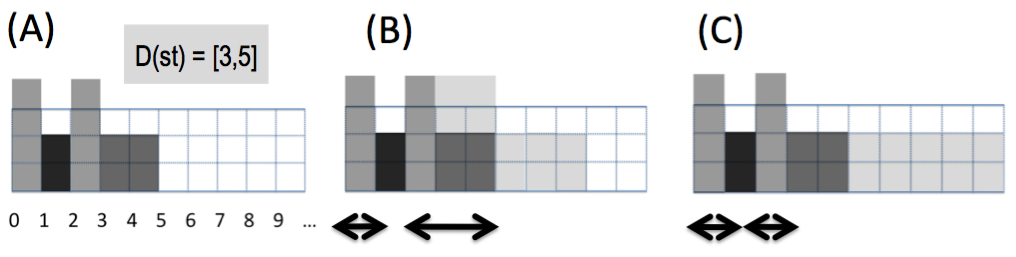}
\end{center}
\vspace{-0.3cm}
\caption{\footnotesize(A) Problem with 4 fixed activities
and one activity of length 5 that can start from time 3 to 5. We assume $D(\numb)=\{2\}$, $\len=3$ and $\kv=0$.
(B) Solution satisfying
\FOCUSW~with $\C=4$. (C) Solution satisfying
\FOCUSW~with $\C=2$.
}
\vspace{-0.2cm}
  \label{fig:WF}
\end{figure}


\paragraph{Dynamic Programming (DP) Principle}
Given a partial instantiation $I_X$ of $X$
and a set of sequences $S_X$ that covers all penalizing variables in $I_X$, we
consider two terms: the number of variables in
$P_k$ and the number of \emph{undetermined }variables, in $U_k$, covered by $S_X$.
We want to find a set $S_X$ that minimizes the second term.
Given a sequence of variables $s_{i,j}$, the cost $cst(s_{i,j})$ is defined as
$cst(s_{i,j}) = \{p|x_p \in U_k, x_p \in s_{i,j}\}$.
We denote cost of $S_X$, $cst(S_X)$, the sum $cst(S_X) = \sum_{s_{i,j} \in S_X} cst(s_{i,j})$.
Given $I_X$ we consider $|P_k| = |\{x_i \in P_k\}|$. We have:
$ \sum_{s_{i,j} \in S} |s_{i,j}| = \sum_{s_{i,j} \in S} cst(s_{i,j}) + |P_k|$.



We start with explaining the main difficulty in building a propagator for $\FOCUSW$.
The constraint has two optimization variables in its scope and we might  not have a solution
that optimizes both variables simultaneously.
\begin{example}\label{exm:focusw_twocostvars}
Consider the set $X=[x_0,x_1,\ldots,x_5]$ with domains $[1,\{0,1\},1,1,\{0,1\},1]$
and $\FOCUSW(X,[2,3],3,0,[0,6])$, solution
$S_X = \{s_{0,2}, s_{3,5}\}$, $\C = 6$,  minimizes $\numb= 2$, while solution
$S_X = \{s_{0,1}, s_{2,3}, s_{5,5} \}$, $\numb = 3$, minimizes $\C = 4$.
\end{example}

Example~\ref{exm:focusw_twocostvars} suggests that we need to fix one of the two optimization variables
and only optimize the other one.
Our algorithm is based on a dynamic program~\cite{daspapvaz06}. For each prefix of variables
$[x_0,x_1,\ldots,x_j]$ and \emph{given} a cost value $c$, it computes a cover of focus cardinality, denoted $S_{c,j}$,
which covers all penalized variables in $[x_0, x_1,\ldots,x_j]$ and has cost \emph{exactly}
$c$. If $S_{c,j}$ does not exist we assume that $S_{c,j} = \infty$.
$S_{c,j}$ is not unique as Example~\ref{exm:four}
demonstrates.
\begin{example}\label{exm:four}
Consider  $X=[x_0,x_1,\ldots,x_{7}]$ and  $\FOCUSW(X, [2,2], 5, 0, [7,7])$,
with $D(x_i) = \{1\}$,
$i \in  I, I = \{0,2,3,5,7\}$ and
$D(x_i)= \{0, 1\}$, $i \in \{0,1, \ldots 7\} \setminus I$.
Consider the subsequence of variables $[x_0,\ldots,x_5]$
and $S_{1,5}$. There are several sets of minimum cardinality
that cover all penalized variables in the prefix $[x_0,\ldots,x_5]$ and has cost
$2$, e.g. $S_{1,5}^1 = \{s_{0,2}, s_{3,5}\}$ or $S_{1,5}^2 = \{s_{0,4}, s_{5,5}\}$.
Assume we sort sequences by their starting points in each set.
We note that the second set is better if we want to extend the last
sequence in this set as the length of the last sequence $s_{5,5}$ is
shorter compared to the length of the last sequence in $S_{1,5}^1$, which is $s_{3,5}$.
\end{example}

Example~\ref{exm:four} suggests that we need to put additional conditions on $S_{c,j}$
to take into account that some sets are better than others. We
can safely assume that none of the sequences in $S_{c,j}$ starts at undetermined
variables as we can always set it to zero.
Hence, we introduce a notion of an ordering between sets $S_{c,j}$ and define
conditions that this set has to satisfy.

\emph{Ordering of sequences in $S_{c,j}$.} We introduce an order over sequences in $S_{c,j}$. Given a set of sequences in $S_{c,j}$ we sort them by their starting points.
We denote $last(S_{c,j})$ the last sequence in $S_{c,j}$ in this order.
If $x_j \in  last(S_{c,j})$ then $|last(S_{c,j})|$ is, naturally,
the length
of $last(S_{c,j})$, otherwise $|last(S_{c,j})| = \infty$.

\emph{Ordering of sets $S_{c,j}$, $c \in [0,\max(\C)]$, $j \in \{0,1,\ldots,n-1\}$.}
We define a comparison operation between two sets $S_{c,j}$ and $S_{c',j'}$.
$S_{c,j} \leq S_{c',j'}$ iff
$|S_{c,j}| < |S_{c',j'}|$ or $|S_{c,j}| = |S_{c',j'}|$ and $last(S_{c,j}) \leq last(S_{c',j'})$.
Note that we do not take account of cost in the comparison as the current definition is sufficient for us.
Using this operation, we can compare all sets $S_{c,j}$ and $S'_{c,j}$ of the same cost for a prefix $[x_0,\ldots,x_j]$.
We say that $S_{c,j}$ is optimal iff satisfies the following 4 conditions.

\begin{proposition}[Conditions on $S_{c,j}$]~
\begin{enumerate}
  \item $S_{c,j}$ covers all $P_k$ variables in  $[x_0,x_1, \ldots,x_j]$, \label{cond:cover}
  \item $cst(S_{c,j})=c$, \label{cond:cost}
  \item $\forall s_{h,g} \in S_{c,j}, x_h \notin U_k$, \label{cond:start}
  \item $S_{c,j}$ is the first set in the order among all sets that satisfy conditions~\ref{cond:cover}--\ref{cond:start}\label{cond:last}.
\end{enumerate}
\end{proposition}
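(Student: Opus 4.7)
The plan is to verify the four conditions in turn and show they single out a unique canonical witness for the DP state $S_{c,j}$.

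Conditions (1) and (2) are immediate: they restate what $S_{c,j}$ was introduced to denote, namely a cover of all penalized variables in the prefix $[x_0,\ldots,x_j]$ whose cost is exactly $c$. There is nothing to prove beyond observing that these are definitional requirements and that the DP was set up to enumerate exactly such covers.

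Condition (3) is the substantive step and will take most of the work. Given a cover satisfying (1) and (2) that contains some sequence $s_{h,g}$ with $x_h \in U_k$, I would run an exchange argument: trimming $x_h$ off the left end of that sequence yields $s_{h+1,g}$, which still covers exactly the same $P_k$ variables (since $x_h$ is undetermined and therefore not penalized) but has cost $c-1$. To preserve the equality $cst(S_{c,j})=c$ while eliminating the $U_k$ start, I would re-absorb the lost unit of cost by extending a neighboring sequence by one free undetermined slot, or if the trimmed sequence becomes empty, by folding its coverage into an adjacent one. Repeating this process until no sequence starts at a $U_k$ variable terminates, because each step strictly decreases the sum of starting indices. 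I would then verify that at each step either the local re-absorption respects the length bound $\len$ and does not increase $|S_X|$, or else a cover of cost $c$ with strictly smaller cardinality already exists, contradicting the minimum-cardinality choice built into $S_{c,j}$.

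Condition (4) is a well-definedness step. The ordering $\leq$ over sets of the same cost (first by cardinality, then by $|last(\cdot)|$) is a total preorder on the finite collection of covers satisfying (1)--(3), so a minimum element exists and is unique up to reordering of identical sequences; that element is by fiat the canonical $S_{c,j}$. The main obstacle will be the exchange argument of condition (3): one has to ensure that an undetermined variable removed from a sequence start can always be re-absorbed elsewhere within the length bound $\len$, and a short case analysis on whether the neighboring sequences are already saturated at length $\len$ (or whether a free $U_k$ slot is available on either side) should suffice to close the argument.
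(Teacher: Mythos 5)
The statement you are proving is not a theorem in the paper: the sentence immediately preceding it reads ``We say that $S_{c,j}$ is optimal iff [it] satisfies the following 4 conditions,'' so the ``Proposition'' is the body of a \emph{definition} of the DP state, and the paper supplies no proof because none is required. The only justification offered for restricting to condition (3) is the one-line remark that a sequence never needs to start at an undetermined variable ``as we can always set it to zero,'' i.e.\ in any support one may set such a variable to a value $\leq k$ and shorten the sequence. By treating the four conditions as a theorem you have manufactured a proof obligation that does not exist, and in doing so you commit to a claim that is actually false.

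Concretely, your exchange argument for condition (3) asserts that any cover of cost exactly $c$ can be transformed, \emph{at the same cost $c$}, into one in which no sequence starts at a $U_k$ variable, by ``re-absorbing the lost unit of cost'' elsewhere. Take $X=[x_0,x_1]$ with $x_0\in U_k$ and $x_1\in P_k$: the set $\{s_{0,1}\}$ covers all penalizing variables and has cost $1$, but the only cover satisfying condition (3) is $\{s_{1,1}\}$, of cost $0$; there is no free undetermined slot anywhere to re-absorb the unit of cost. The correct reading is that conditions (1)--(3) cut down the candidate pool and $S_{c,j}$ is simply permitted not to exist for some pairs $(c,j)$ --- this is exactly the dummy value $f_{c,j}=\{\infty,\infty\}$ in the table (and indeed $f_{1,1}$ is dummy in this example). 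A smaller point on condition (4): the paper explicitly states that $S_{c,j}$ is \emph{not} unique even under all four conditions; the order determines only $|S_{c,j}|$ and $last(S_{c,j})$, which is all the DP records, so your claim of uniqueness ``up to reordering of identical sequences'' is also stronger than what holds or is needed.
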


As can be seen from definitions above, given a
subsequence of variables $x_0,\ldots, x_j$,  $S_{c,j}$ is not unique and might not exist.
However, if $|S_{c,j}| = |S_{c',j'}|$, $c = c'$ and $j = j'$,  then
$last(S_{c,j}) = last(S_{c',j'})$.

\begin{example}\label{exm:four_con}
Consider  $\FOCUSW$  from Example~\ref{exm:four}.
Consider the subsequence 
$[x_0,x_1]$.
$S_{0,1} = \{s_{0,0}\}$,
$S_{1,1} = \{s_{0,1}\}$.
Note that
$S_{2,1}$ does not exist.
Consider the subsequence 
$[x_0,\ldots,x_5]$. We have
$S_{0,5} = \{s_{0,0}, s_{2,3}, s_{5,5}\}$,
$S_{1,5} = \{s_{0,4}, s_{5,5}\}$ and
$S_{2,5} = \{s_{0,3}, s_{5,5}\}$.
By definition, $last(S_{0,5}) = s_{5,5}$, $last(S_{1,5}) = s_{5,5}$
and $last(S_{2,5}) = s_{5,5}$.
Consider the set $S_{1,5}$. Note that there exists another set $S'_{1,5} = \{s_{0,0}, s_{2,5}\}$
that satisfies conditions~\ref{cond:cover}--\ref{cond:start}.
Hence, it has the same cardinality as $S_{1,5}$
and the same cost. However, $S_{1,5} < S'_{1,5}$ as
$|last(S_{1,5})| = 1 < |last(S'_{1,5})| = 3$.
\end{example}

\paragraph{Bounds disentailment}
Each cell in the dynamic programming table $f_{c,j}$, $c \in [0, \cU]$, $j \in \{0,1,\ldots,n-1\}$, where $\cU = max(\C) - |P_k|$,
is a pair of values $q_{c,j}$ and $l_{c,j}$,  $f_{c,j} =\{q_{c,j}, l_{c,j}\}$, stores information about $S_{c,j}$.
Namely, $q_{c,j} = |S_{c,j}|$,  $l_{c,j} = |last(S_{c,j})|$ if
$last(S_{c,j}) \neq \infty$ and $\infty$ otherwise.
We say that $f_{c,j}/q_{c,j}/l_{c,j}$
is a dummy (takes a dummy value) iff $f_{c,j}=\{\infty,\infty\}/q_{c,j}  = \infty/l_{c,j}  = \infty$.
If $y_1 =\infty$  and  $ y_2 = \infty$ then we assume that they are equal.
We introduce a dummy variable $x_{-1}$, $D(x_{-1}) = \{0\}$
and a row $f_{-1,j}$, $j=-1,\ldots,n-1$ to keep uniform notations.

{
\begin{algorithm}
\scriptsize
\label{a:disent}
  \caption{Weighted $\FOCUS$($x_0, \ldots, x_{n-1}$)}
  \For{$c \in -1..\cU$}{ \label{a:for_loop_dummy}
      \For{$j \in -1..n-1$}{
         $f_{c,j} \gets \{\infty,\infty\}$\;
      }
  }
  $f_{0,-1} \gets \{0,0\}$
  \;
  \label{a:first_col}
  \For{$j \in 0..n-1$}{ \label{a:for_loop}
      \For{$c \in 0..j$}{
        \If(\tcc*[f]{penalizing}){$ x_j \in P_k$}{
            \uIf
            {$(l_{c,j-1} \in [1,len)) \vee (q_{c,j-1} = \infty)$}{\label{a:cond_P_k}
                 $f_{c,j} \gets  \{q_{c,j-1}, l_{c,j-1} + 1\}$\; \label{a:cond_1}
            }
              \lElse
              {
                 $f_{c,j} \gets   \{q_{c,j-1}+1, 1\}$\; \label{a:cond_2}
              }
        }				
        \If(\tcc*[f]{undetermined }){$ x_j \in U_k$} {
            \lIf{$(l_{c-1,j-1} \in [1,len) \wedge q_{c-1,j-1} = q_{c,j-1}) \vee (q_{c,j-1} = \infty)$}{\label{a:cond_U_k}
                  $f_{c,j} \gets  \{q_{c-1,j-1}, l_{c-1,j-1} + 1\}$
                  \label{a:cond_3}
            }\lElse{
                   $f_{c,j} \gets   \{q_{c,j-1}, \infty\} $   
                   \label{a:cond_4}
            }
        }

        \If(\tcc*[f]{ neutral }){$ x_j \in N_k$}{
           $f_{c,j} \gets  \{q_{c,j-1}, \infty\}$
           \label{a:cond_5}
        }

}
  }
 return $f$\;
\end{algorithm}
}

Algorithm~\ref{a:disent} gives pseudocode for the propagator.
The intuition behind the algorithm is as follows. We emphasize again that by cost 
we mean the number of covered variables in $U_k$.
\begin{figure}
\begin{center}
  \includegraphics[width=0.45\textwidth]{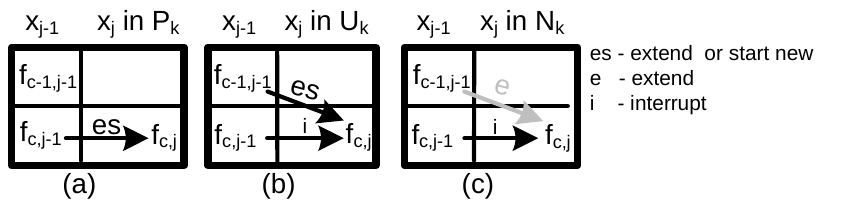}
  \vspace{-0,3cm}
\end{center}
\caption{Representation of one step of Algorithm~\ref{a:disent}.}
\vspace{-0,3cm}
  \label{fig:focusw_scheme}
\end{figure}

 If $x_j \in P_k$ then we do {not} increase the cost of  $S_{c,j}$ compared to $S_{c,j-1}$ as the cost only depends on $x_j \in U_k$. Hence, the best move for us is to extend $last(S_{c,j-1})$ or  start a new sequence if it is possible. This is encoded in lines~\ref{a:cond_1} and~\ref{a:cond_2} of the algorithm. Figure~\ref{fig:focusw_scheme}(a) gives a schematic representation of these arguments.

 If $x_j \in U_k$ then we have two options. We can obtain $S_{c,j}$ from $S_{c-1,j-1}$ by increasing
$cst(S_{c-1,j-1})$  by one. This means that $x_i$ will be covered by $last(S_{c,j})$. Alternatively, from  $S_{c,j-1}$ by interrupting $last(S_{c,j-1})$. 
This is encoded in line~\ref{a:cond_3} 
of the algorithm (Figure~\ref{fig:focusw_scheme}(b)).

 If $x_j \in N_k$ then we do {not} increase the cost of  $S_{c,j}$ compared to $S_{c,j-1}$.
 Moreover, we must interrupt $last(S_{c,j-1})$, line~\ref{a:cond_5} (Figure~\ref{fig:focusw_scheme}(c), ignore the gray arc).

First we prove a property of the dynamic programming table.
We define a comparison operation between $f_{c,j}$ and $f_{c',j'}$ induced
by a comparison operation between $S_{c,j}$ and $S_{c',j'}$:
$f_{c,j} \leq f_{c',j'}$ if $(q_{c,j} < q_{c',j'})$ or  ($q_{c,j} = q_{c',j'}$ and
$l_{c,j} \leq l_{c',j'}$). In other words, as in a comparison operation between sets,
we compare by the cardinality of sequences, $|S_{c,j}|$ and $|S_{c',j'}|$, and,
then by the length of the last sequence in each set, $last(S_{c,j})$
and $last(S_{c',j'})$. 
We omit proofs of the next two lemmas due to space limitations (see Appendix).
\begin{lemma}\label{l:dp_prop_mon}
Consider  $\FOCUSW(X, \numb,\len,\kv, \C)$.
Let $f$ be dynamic programming table returned by Algorithm~\ref{a:disent}.
Non-dummy elements $f_{c,j}$ are monotonically nonincreasing in each column,
so that $f_{c',j} \leq f_{c,j}$, $0 \leq c < c' \leq \cU$, $j=[0,\ldots,n-1]$.
\end{lemma}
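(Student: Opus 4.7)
The plan is to prove the claim by induction on the column index $j$, in parallel with a simple auxiliary invariant: every non-dummy $l_{c,j}$ with $j \geq 0$ lies in $\{1,2,\ldots,\len\} \cup \{\infty\}$. This invariant is immediate from inspecting lines~\ref{a:cond_1}--\ref{a:cond_5}. The base cases $j = -1$ and $j = 0$ are trivial because at most one entry per column is non-dummy in each case.

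For the inductive step, I would fix $c < c'$ with both $f_{c,j}$ and $f_{c',j}$ non-dummy and split on the label of $x_j$. If $x_j \in N_k$, both entries copy $q$ from column $j-1$ and set $l = \infty$, so the IH applied to the pair $(c,c')$ in column $j-1$ transfers verbatim. If $x_j \in P_k$, there are four sub-cases according to whether each of $c$ and $c'$ extends the last sequence ($l_{\cdot,j-1} \in [1,\len)$) or starts a new one; each is a direct check using the IH at $(c,c')$, combined in the extend-vs.-start-new mixed sub-cases with the auxiliary invariant to exclude $l_{c',j-1} = \len$ when the $q$ values tie.

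The real work is the case $x_j \in U_k$, where the entry at index $c$ draws either from $f_{c-1,j-1}$ (extend branch of line~\ref{a:cond_3}) or from $f_{c,j-1}$ (else branch of line~\ref{a:cond_4}), and similarly for $c'$. Three of the four combinations go through smoothly: \emph{both-else} reduces to the $N_k$ argument; \emph{both-extend} uses the IH on the pair $(c-1,c'-1)$, both of which are non-dummy because extension requires $l \in [1,\len)$; and \emph{(else at $c$, extend at $c'$)} uses the IH together with the fact that $l_{c',j}$ is finite while $l_{c,j} = \infty$ breaks any tie in the $q$ comparison in the right direction. The main obstacle I anticipate is the remaining sub-case \emph{(extend at $c$, else at $c'$)}: a naive use of the IH still allows $q_{c',j-1} = q_{c-1,j-1}$, which would incorrectly yield $f_{c',j} > f_{c,j}$ since $l_{c,j}$ is finite while $l_{c',j} = \infty$. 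I would rule this out by observing that if the $q$ values coincide across column $j-1$ from $c-1$ through $c'$, then monotonicity in that column forces $l_{c'-1,j-1} \leq l_{c-1,j-1} < \len$; combined with the auxiliary invariant, this places $l_{c'-1,j-1}$ in $[1,\len)$, so the extend branch must fire at $c'$, contradicting the sub-case assumption. Hence $q_{c',j-1} < q_{c-1,j-1}$, and $f_{c',j} < f_{c,j}$ follows.
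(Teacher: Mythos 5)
Your overall strategy --- induction on the column index $j$ with a case split on the label of $x_j$ and on which branch of the recurrence fires at rows $c$ and $c'$ --- is the same as the paper's, and your tie-breaking argument for the hard $U_k$ sub-case is essentially the paper's Case~2d. However, there is a genuine gap: your treatment of the sub-case (extend at $c$, else at $c'$) silently assumes that $f_{c'-1,j-1}$ is non-dummy. You write that if $q_{c',j-1}=q_{c-1,j-1}$ then ``the $q$ values coincide across column $j-1$ from $c-1$ through $c'$'' and apply the inductive hypothesis to conclude $l_{c'-1,j-1}\leq l_{c-1,j-1}<\len$; but the inductive hypothesis is a statement about \emph{non-dummy} entries only, so it gives no information about row $c'-1$ unless you already know $f_{c'-1,j-1}$ is non-dummy. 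If $f_{c'-1,j-1}$ were dummy while $f_{c-1,j-1}$ and $f_{c',j-1}$ are non-dummy (a configuration your invariants do not exclude when $c'>c+1$), the else branch would fire at $c'$ for the ``wrong'' reason ($l_{c'-1,j-1}=\infty\notin[1,\len)$), and with $q_{c',j-1}=q_{c-1,j-1}$ you would get $f_{c',j}=\{q_{c-1,j-1},\infty\}>f_{c,j}=\{q_{c-1,j-1},l_{c-1,j-1}+1\}$: the inductive step would actually fail. So the contiguity of non-dummy entries in a column is not a convenience here but a load-bearing fact.

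The paper closes exactly this hole by proving two auxiliary results before the main induction: non-dummy values are consecutive in each column (Lemma~\ref{l:dp_prop_conseq}) and the entire row $c=0$ is non-dummy (Proposition~\ref{l:dp_first_row}); together these make the non-dummy entries of every column a prefix $[0,c_{\max}]$, which both licenses the paper's reduction to consecutive pairs $c'=c+1$ and supplies the non-dumminess you need at row $c'-1$. Your auxiliary invariant on the range of the $l$-values is correct and useful (the paper relies on the same facts implicitly), but it is not the missing ingredient. To repair your proof, either establish the consecutivity property first (a short induction over $j$ with the same three-way case split on the label of $x_j$) and invoke it in the $U_k$ sub-case, or restrict the induction to $c'=c+1$ --- where $c'-1=c$ and non-dumminess is available from the sub-case hypothesis --- and then use consecutivity anyway to chain the consecutive comparisons up to general $c<c'$.
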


\begin{lemma}\label{l:disent}
Consider   $\FOCUSW(X, \numb,\len,\kv, \C)$.
The dynamic  programming table  $f_{c,j} =\{q_{c,j},l_{c,j}\}$ $c \in [0, \cU]$, $j=0,\ldots,n-1$,
is correct in the sense that if $f_{c,j}$ exists and it is non-dummy then a
corresponding set of sequences $S_{c,j}$ exists and
satisfies conditions~\ref{cond:cover}--\ref{cond:last}.
The time complexity of Algorithm~\ref{a:disent} is $O(n\max(\C))$.
\end{lemma}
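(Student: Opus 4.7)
The plan is to prove correctness by induction on the column index $j$, and then read off the complexity bound from the two nested loops of Algorithm~\ref{a:disent}. First I would set up the induction hypothesis: for every $c$ for which $f_{c,j-1}$ is non-dummy there exists a set $S_{c,j-1}$ satisfying the four conditions~\ref{cond:cover}--\ref{cond:last}, and the stored pair $(q_{c,j-1},l_{c,j-1})$ equals $(|S_{c,j-1}|,|last(S_{c,j-1})|)$. The base case is $j=-1$: the only non-dummy entry is $f_{0,-1}=\{0,0\}$, which corresponds to the empty cover of the empty prefix, trivially satisfying all four conditions.

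For the inductive step, I would fix a column $j$ and a cost value $c$, and split on the status of $x_j$. If $x_j \in P_k$, then $x_j$ must be covered by condition~\ref{cond:cover} and the cost is unchanged because $cst$ only counts undetermined variables; the two only possibilities are to extend $last(S_{c,j-1})$ (legal iff $l_{c,j-1}\in[1,\len)$) or to open a fresh singleton sequence at $x_j$, corresponding exactly to lines~\ref{a:cond_1} and~\ref{a:cond_2}. If $x_j\in U_k$, I would argue there are two legitimate moves: leave $x_j$ uncovered (same cost, interrupts the last sequence, giving $\{q_{c,j-1},\infty\}$) or extend $last(S_{c-1,j-1})$ by one (bumping the cost from $c-1$ to $c$). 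The algorithm (line~\ref{a:cond_3}) chooses the extension when it is feasible and strictly improves the order on $f_{c,j}$; otherwise (line~\ref{a:cond_4}) it falls back to the interruption. If $x_j\in N_k$, the variable can be neither covered nor a sequence start (it fails $x_j>k$), so cost and cardinality are preserved and the last sequence is forcibly interrupted, matching line~\ref{a:cond_5}.

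The main obstacle is establishing condition~\ref{cond:last}: among all sets of sequences on $[x_0,\dots,x_j]$ of cost exactly $c$ that satisfy conditions~\ref{cond:cover}--\ref{cond:start}, the one built by the algorithm is minimum under the order $\leq$ defined on sets. For this I would rely on Lemma~\ref{l:dp_prop_mon}: monotonicity in each column rules out that a more distant predecessor (with cost $c'<c-1$ or $c'<c$) could yield a strictly better $f_{c,j}$ after a single transition, because a better $(q,l)$ on column $j-1$ propagates to a no-worse pair on column $j$ under each transition rule. The remaining ambiguity is between the two admissible predecessors in cases (P) and (U), and in both cases a direct comparison of the resulting pairs $(q_{c,j},l_{c,j})$ shows that the conditional tests in lines~\ref{a:cond_P_k} and~\ref{a:cond_U_k} select the lexicographically smaller pair; the extra clause $q_{c-1,j-1}=q_{c,j-1}$ in line~\ref{a:cond_U_k} is exactly what guarantees that extending does not worsen the cardinality versus the interrupt alternative.

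Finally, the complexity bound is immediate: the outer loop runs $n$ times, the inner loop at most $\cU+1 \leq \max(\C)+1$ times, and each cell performs a constant number of comparisons and assignments on the previously computed column, yielding $O(n\max(\C))$ time (and the same space, which can be reduced to $O(\max(\C))$ by keeping only the current and previous columns).
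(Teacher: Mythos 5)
Your proposal is correct and follows essentially the same route as the paper's own proof: induction on the prefix length with a case split on $x_j \in P_k$, $U_k$, $N_k$, using Lemma~\ref{l:dp_prop_mon} to justify that the local greedy choice (in particular the clause $q_{c-1,j-1}=q_{c,j-1}$ in the undetermined case) yields the set that is first in the order of condition~\ref{cond:last}, followed by the immediate loop-counting argument for the $O(n\max(\C))$ bound. No substantive differences to report.
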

%

\begin{example}
Table~\ref{table:exm:four} shows an execution
of Algorithm~\ref{a:disent} on $\FOCUSW$ from Example~\ref{exm:four}.
Note that $|P_0| = 5$. Hence, $\cU = max(\C) - |P_0| = 2$.
As can be seen from the table, the constraint has a solution as there exists
a set $S_{2,7}= \{s_{0,3}, s_{5,7}\}$ such that $|S_{2,7}| = 2$.
\end{example}

\begin{table}
\centering
\scriptsize{
\begin{tabular}{|@{}c@{}|@{}c@{}|@{}c@{}|@{}c|@{}c|@{}c|@{}c|@{}c|@{}c|}
\hline
& $D(x_{0})$& $D(x_{1})$& $D(x_{2})$& $D(x_{3})$& $D(x_{4})$& $D(x_{5})$& $D(x_{6})$& $D(x_{7})$\\ 
c & $[1,1]$ & $[0,1]$ & $[1,1]$ & $[1,1]$ & $[0,1]$ & $[1,1]$ & $[0,1]$ & $[1,1]$ \\
\hline
\hline
$0$ & $\{1,1\}$ & $\{1,\infty\}$ & $\{2,1\}$ & $\{2,2\}$ & $\{2,\infty\}$ & $\{3,1\}$ & $\{3,\infty\}$ & $\{4,1\}$ \\
$1$ & & $\{1,2\}$ & $\{1,3\}$ & $\{1,4\}$ & $\{1,\infty\}$ & $\{2,1\}$ & $\{2,\infty\}$& $\{3,1\}$ \\ 
$\cU = 2$ & & & & & $\{1,5\}$ & $\{2,1\}$ & $\{2,2\}$ & $\{2,3\}$ \\

\hline
\end{tabular}
\vspace{-0,1cm}
\caption{An execution of Algorithm~\ref{a:disent} on $\FOCUSW$ from Example~\ref{exm:four}. Dummy values $f_{c,j}$ are removed. \label{table:exm:four}}
\vspace{-0,3cm}
}
\end{table}
\paragraph{Bounds consistency}

To enforce BC on variables $x$, we compute an
additional DP table $b$, $b_{c,j}$, $c \in [0, \cU]$, $j\in[-1,n-1]$
on the reverse sequence of variables $x$.
\begin{lemma}\label{l:focusw_bc}
Consider  $\FOCUSW(X, \numb, \len,\kv, \C)$.
Bounds consistency can be enforced in $O(n\max(\C))$ time.
\end{lemma}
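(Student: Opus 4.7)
The plan is to enforce BC by pairing the forward table $f$ returned by Algorithm~\ref{a:disent} on $X$ with a backward table $b$ obtained by applying Algorithm~\ref{a:disent} to the reversed sequence $[x_{n-1}, x_{n-2}, \ldots, x_0]$. Each table costs $O(n\max(\C))$ by Lemma~\ref{l:disent}, so the preprocessing already matches the stated complexity. The entry $b_{c, n-2-j}$ then summarizes the best cover of the suffix $[x_{j+1}, \ldots, x_{n-1}]$ of cost exactly $c$, by the analogue of Lemma~\ref{l:disent} applied to the reversed sequence.

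Filtering of $\numb$ and $\C$ is read off directly from the last column of $f$. The lower bound of $\numb$ is $\min\{q_{c,n-1} : c + |P_\kv| \leq \max(\C)\}$, and the lower bound of $\C$ is $|P_\kv| + \min\{c : q_{c,n-1} \leq \max(\numb)\}$; both are computed in $O(\max(\C))$ time. No upper-bound pruning is needed since the constraint imposes only the one-sided inequalities $|S_X| \leq \numb$ and $\sum |s_{i,j}| \leq \C$.

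To filter a variable $x_j$, I check, for each value class of $x_j$ allowed by $D(x_j)$ (a high value $v > \kv$, or a low value $v \leq \kv$), whether some cover consistent with this class meets both $\max(\numb)$ and $\max(\C)$. This reduces to looking for $c_1, c_2$ with $c_1 + c_2 + \delta_j \leq \cU$ such that the prefix entry $f_{c_1, j-1}$ and the suffix entry $b_{c_2, n-2-j}$ combine to give cardinality at most $\max(\numb)$, where $\delta_j$ records the contribution of $x_j$ to the cost (zero if $x_j$ is penalized or neutral, one if $x_j \in U_\kv$ is covered). The combined cardinality is $q_{c_1, j-1} + q^b_{c_2, n-2-j}$, except when the last sequence of the prefix touches $x_j$ and the first sequence of the suffix also touches $x_j$: these then merge into one, reducing the cardinality by one but imposing $l_{c_1, j-1} + l^b_{c_2, n-2-j} + \delta_j \leq \len$. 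If no such $(c_1, c_2)$ exists for a given value class, the corresponding portion of $D(x_j)$ is pruned.

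The main obstacle is this merging case: one must ensure that working with only the DP-optimal $f_{c_1, j-1}$ and $b_{c_2, n-2-j}$ does not miss a feasible cover that uses suboptimal prefix or suffix choices but produces a better total after merging. Lemmas~\ref{l:dp_prop_mon} and~\ref{l:disent} supply the needed dominance: among all prefix covers of cost exactly $c_1$, $S_{c_1, j-1}$ minimizes cardinality and then the length of its last sequence, so any feasible global cover can be rewritten using the DP-optimal prefix and suffix without increasing cardinality or cost. Given this dominance, for each $j$ the search reduces to a single scan over $c_1 \in [0, \cU]$ with $c_2$ determined up to $\delta_j$, yielding $O(\max(\C))$ work per variable and $O(n\max(\C))$ overall.
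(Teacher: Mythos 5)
Your proposal follows essentially the same route as the paper's proof: build the forward table $f$ and a backward table $b$ on the reversed sequence, test support for each value class of $x_j$ by combining neighbouring columns $f_{c_1,j-1}$ and $b_{c_2,n-2-j}$, and use the column monotonicity of Lemma~\ref{l:dp_prop_mon} to argue that for each $c_1$ only the single maximal $c_2$ within the cost budget ($c_1+c_2+1\leq\cU$ or $c_1+c_2\leq\cU$ depending on the value class) needs to be examined, giving $O(\cU)$ work per variable. The extra bookkeeping you supply (the merge condition on $l_{c_1,j-1}+l^b_{c_2,n-2-j}$ and the direct bound filtering of $\numb$ and $\C$ from the last column) is consistent with, and slightly more explicit than, the paper's sketch.
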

\begin{proof} (Sketch)
We build
dynamic programming tables $f$ and $b$.
We will show that to check if $x_i=v$ has a support it is sufficient to examine $O(\cU)$ pairs of values
$f_{c_1,i-1}$ and $b_{c_2,n-i-2}$, $c_1,c_2 \in [0, \cU]$ which are neighbor columns to the $i$th column.
It is easy to show that if we consider all possible pairs of elements in
$f_{c_1,i-1}$ and $b_{c_2,n-i-2}$ then we determine if there exists a support for $x_i=v$.
There are $O(\cU \times \cU)$ such pairs. The main part of the proof shows that it sufficient to consider
$O(\cU)$ such pairs. In particular, to check a support for a variable-value pair $x_i=v$, $v > k$,
for each $f_{c_1,i-1}$ it is sufficient
to consider only one element $b_{c_2,n-i-2}$ such that
$b_{c_2,n-i-2}$ is non-dummy  and $c_2$ is the maximum value
that satisfies inequality $c_1+c_2 + 1 \leq \cU$.
To check a support for a variable-value pair $x_i=v$, $v \leq k$,
for each $f_{c_1,i-1}$ it is sufficient
to consider only one element $b_{c_2,n-i-2}$ such that
$b_{c_2,n-i-2}$ is non-dummy  and $c_2$ is the maximum value
that satisfies inequality $c_1+c_2 \leq \cU$.
\end{proof}

We observe a useful property of the constraint.
If there exists $f_{c,n-1}$ such that $c < max(\C)$ and $q_{c,n-1} < max(y_c)$
then the constraint is BC. This follows from the observation
that given a solution of the constraint $S_X$, changing a variable value
can increase $cst(S_X)$ and $|S_X|$  by at most one.


%

Alternatively we can decompose \FOCUSW~using
$O(n)$ additional variables and constraints.
\begin{proposition}\label{prop:decomp_focusw}
Given \FOCUS($X,\numb,\len,\kv$), let $\C$ be a variable and
$B$$=$$[b_0, b_{1}, \ldots, b_{n-1}]$ be a set of variables such that $\forall$$b_l$$\in$$B, D(b_l)$$=$$\{0,1\}$.
\FOCUSW($X,\numb,\len,\kv, \C$) $\Leftrightarrow$
{
\FOCUS($X,\numb,\len,\kv$)
$\wedge$ $[\forall l$, $0\leq l <n$,
$[(x_l \leq \kv) \wedge (b_l=0)] \vee [(x_l > \kv) \wedge (b_l=1)]]$
$\wedge$ $\sum_{l \in \{0,1, \ldots, n-1\}} b_l \leq \C$.
}
\end{proposition}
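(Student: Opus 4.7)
The plan is to prove the stated equivalence by a direct two-direction argument, leveraging the fact that in both \FOCUS~and \FOCUSW~the covering condition (2) is an iff and the sequences in $S_X$ are disjoint. The key observation that drives both directions is the identity
\[
\sum_{s_{i,j} \in S_X} |s_{i,j}| \;=\; |\{\, l : x_l > \kv \,\}|,
\]
which follows because disjoint sequences partition exactly the set of indices where $x_l>\kv$ (by condition (2) of Definitions~\ref{def:focus} and~\ref{def:focus_basic_cost}). Under the decomposition, the disjunction constraint forces $b_l = 1 \Leftrightarrow x_l > \kv$, so $\sum_l b_l$ is exactly that same cardinality.

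For the forward direction ($\Rightarrow$), I would start from an instantiation of $X \cup \{\numb\} \cup \{\C\}$ satisfying \FOCUSW, and exhibit the witnessing set $S_X$. Conditions (1), (2), (3) of Definition~\ref{def:focus_basic_cost} coincide with conditions (1), (2), (3) of Definition~\ref{def:focus}, so \FOCUS~holds with the same $S_X$. I then set $b_l$ according to whether $x_l>\kv$ or not, which immediately satisfies the disjunction. Finally, the sum constraint follows from the identity above combined with condition (4) of \FOCUSW: $\sum_l b_l = \sum_{s_{i,j}} |s_{i,j}| \leq \C$.

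For the backward direction ($\Leftarrow$), I would take an instantiation satisfying \FOCUS, the disjunction constraints, and the sum constraint. The witnessing $S_X$ of \FOCUS~gives conditions (1)–(3) of Definition~\ref{def:focus_basic_cost} for free. The disjunction forces $b_l$ to be the indicator of $x_l>\kv$, so applying the identity in the other direction gives $\sum_{s_{i,j}} |s_{i,j}| = \sum_l b_l \leq \C$, which is exactly condition (4).

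There is no real obstacle; the only subtle point is emphasizing that the disjointness of $S_X$ plus the biconditional form of condition (2) is what makes the total length of sequences equal the number of indicator bits set to one. I would make this explicit in one line rather than burying it, since it is the single fact that ties $\C$ to $\sum b_l$ and makes the decomposition tight (as opposed to merely sound). The Boolean domain $D(b_l)=\{0,1\}$ together with the exclusive-or structure of the disjunction ensures that $b_l$ is uniquely determined by $x_l$, so no ambiguity arises when moving between the two formulations.
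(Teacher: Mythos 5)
Your argument is correct, and in fact the paper states Proposition~\ref{prop:decomp_focusw} without any proof (it is treated as immediate from the definitions), so there is nothing to compare against. Your key identity $\sum_{s_{i,j} \in S_X} |s_{i,j}| = |\{\, l : x_l > \kv \,\}|$, which holds for \emph{every} witnessing $S_X$ because condition~(2) of Definitions~\ref{def:focus} and~\ref{def:focus_basic_cost} is a biconditional and the sequences are disjoint, is precisely the fact that makes the decomposition exact rather than merely sound, and your two-direction argument correctly supplies the omitted justification.
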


Enforcing BC on each constraint of the decomposition is
weaker than BC on \FOCUSW. Given
$x_l \in X$, a value may have a unique support for $\FOCUS$ which
violates  $\sum_{l \in \{0,1, \ldots, n-1\}} b_l \leq \C$, and conversely.
Consider $n$$=$$5$, $x_0$$=$$x_2$$=$$1$, $x_3$$=$$0$, and
$D(x_1)$$=$$D(x_4)$$=$$\{0,1\}$, $\numb$$=$$2$, $\C$$=$$3$, $\kv$$=$$0$ and $\len$$=$$3$.
Value $1$ for $x_4$ corresponds to this case.

\section{Weighted Springy FOCUS}\label{sec:focuswh}
We consider a further generalization of the $\FOCUS$
constraint that combines $\FOCUSH$ and $\FOCUSW$. 
We prove that we can propagate this constraint in $O(n\max(\C))$ time,
which is same as enforcing BC on $\FOCUSW$.


\begin{definition} \label{def:focuswh}
Let
$\numb$ and $\C$ be two variables and $\kv$, $\len$, $\h$ be three integers,
such that $1 \leq \len \leq |X|$ and $0<\h<\len-1$. An instantiation of $X \cup \{\numb\} \cup {\C}$ satisfies \FOCUSWH($X,\numb, \len, \h, \kv, \C$)
iff there exists a set $S_X$ of \emph{disjoint} sequences of indices $s_{i,j}$ such that five conditions are all satisfied:
\emph{\bf (1)}
$|S_X| \leq \numb$
\emph{\bf (2)}
$\forall x_l \in X$, $x_l>\kv \Rightarrow \exists s_{i,j} \in S_X$ such that $l \in s_{i,j}$
\emph{\bf (3)}
$\forall s_{i,j} \in S_X$, $|\{ l \in s_{i,j}$, $x_l \leq \kv \} |$ $\leq \h$
\emph{\bf (4)}
$\forall s_{i,j} \in S_X$, $j-i+1 \leq \len$,   $x_i>k$ and $x_j>k$.
\emph{\bf (5)}
$ \sum_{s_{i,j} \in S_X} |s_{i,j}| \leq \C$.
\end{definition}


We can again partition cost of $S$ into two terms.
$ \sum_{s_{i,j} \in S} |s_{i,j}| = \sum_{s_{i,j} \in S} cst(s_{i,j}) + |P_k|$.
However, $cst(s_{i,j})$ is the number of \emph{undetermined} and \emph{neutral} variables covered $s_{i,j}$,
$cst(s_{i,j}) = \{p|x_p \in U_k \cup N_k, x_p \in s_{i,j}\}$ as we allow to cover up to $h$
\emph{neutral} variables.

The propagator is again based on a dynamic program that for each prefix of variables
$[x_0, x_1,\ldots,x_j]$ and given cost $c$ computes a cover $S_{c,j}$ of minimum cardinality
that covers all penalized variables in the prefix $[x_0, x_1, \ldots,x_j]$ and has cost \emph{exactly}
$c$. We face the same problem of how to compare two sets $S_{c,j}^1$ and $S_{c,j}^2$
of minimum cardinality.
The issue here is how to compare $last(S_{c,j}^1)$ and $last(S_{c,j}^2)$  if they cover
a different number of neutral variables.
Luckily, we can avoid this problem due to the following monotonicity property.
If $last(S_{c,j}^1)$ and $last(S_{c,j}^2)$ are not equal to infinity then they both
end at the same position $j$. Hence, if $last(S_{c,j}^1) \leq last(S_{c,j}^2)$ then
the number of neutral variables covered by $last(S_{c,j}^1)$ is no
larger than the number of neutral variables covered by $last(S_{c,j}^2)$.
Therefore, we can define order on sets $S_{c,j}$ as we did
in Section~\ref{sec:focusw} for \FOCUSW.


Our bounds disentailment detection algorithm for $\FOCUSWH$ mimics Algorithm~\ref{a:disent}.
We omit the pseudocode due to space limitations but
highlight two not-trivial differences between this algorithm 
and Algorithm~\ref{a:disent}.
The first difference is that each cell in the dynamic programming table $f_{c,j}$, $c \in [0, \cU]$, $j \in \{0,1,\ldots,n-1\}$, where $\cU = max(\C) - |P_k|$, is a triple of values $q_{c,j}$, $l_{c,j}$ and $h_{c,j}$,  $f_{c,j} =\{q_{c,j}, l_{c,j}, h_{c,j}\}$. The new parameter $h_{c,j}$ stores the number of neutral variables covered by  $last(S_{c,j})$. The second difference is in the way we deal with neutral variables.
If $x_j \in N_k$  then we have two options now. We can obtain $S_{c,j}$ from $S_{c-1,j-1}$ by increasing
$cst(S_{c-1,j-1})$ by one and increasing the number of covered neutral variables by $last(S_{c,j-1})$ (Figure~\ref{fig:focusw_scheme}(c), the gray arc). Alternatively, we can obtain $S_{c,j}$  from  $S_{c,j-1}$ by interrupting $last(S_{c,j-1})$  (Figure~\ref{fig:focusw_scheme}(c), the black arc).
BC can enforced using two modifications of the corresponding
algorithm for $\FOCUSW$ (a proof is given in Appendix).  

\begin{lemma}\label{l:focuswh_bc}
Consider  $\FOCUSWH(X, \numb,$ $\len, \h, \kv, \C)$.
BC can be enforced in $O(n\max(\C))$ time.
\end{lemma}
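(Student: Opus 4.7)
\noindent\textbf{Proof plan for Lemma~\ref{l:focuswh_bc}.}
The plan is to mirror the proof of Lemma~\ref{l:focusw_bc} for $\FOCUSW$, adapted to the richer table cells $f_{c,j}=\{q_{c,j},l_{c,j},h_{c,j}\}$ used for $\FOCUSWH$. First I would build the forward table $f$ by the variant of Algorithm~\ref{a:disent} sketched in the paragraph preceding the lemma: treat $P_k$ variables as before, $U_k$ variables exactly as in lines~\ref{a:cond_3}--\ref{a:cond_4} of Algorithm~\ref{a:disent} (with $h_{c,j}$ inherited from $h_{c-1,j-1}$ when we extend and reset to $0$ when we break), and for $N_k$ variables implement the two options described: either extend $last(S_{c-1,j-1})$ provided $l_{c-1,j-1}\in[1,\len)$ and $h_{c-1,j-1}<\h$ (updating length and $h_{c,j}=h_{c-1,j-1}+1$), or break, giving $f_{c,j}=\{q_{c,j-1},\infty,0\}$. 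The symmetric table $b$ is built on the reversed sequence in the same way. Each cell costs $O(1)$, so both tables are built in $O(n\cU)=O(n\max(\C))$ time.

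Next I would state the monotonicity property that replaces Lemma~\ref{l:dp_prop_mon}: non-dummy cells in a column are still nonincreasing in $c$ when compared by the pair $(q_{c,j},l_{c,j})$. The value $h_{c,j}$ is not itself monotonic, but this does not affect the ordering used for support checks because $h$ only participates in local feasibility tests. The justification is exactly as before: along a column, the ``best'' cover of cost $c+1$ is obtained from the best cover of cost $c$ by absorbing one extra neutral/undetermined variable into $last$, which can only degrade cardinality and last-length lexicographically.

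For bounds consistency, I would check, for each variable $x_i$ and each candidate value $v$ on the boundary of $D(x_i)$, whether there is a joint assignment by splitting at position $i$ and combining $f_{c_1,i-1}$ with $b_{c_2,n-i-2}$. The case analysis follows the three labels of $x_i$ relative to~$\kv$:
\begin{itemize}
\item If $v>\kv$, $x_i$ must be covered. Either (a) $x_i$ is a singleton sequence (both neighbors ``closed''), (b) $x_i$ extends only one side (the opposite neighbor closed), or (c) $x_i$ merges $last(f)$ with $first(b)$. In case (c) feasibility demands $l_{c_1,i-1}+1+l_{c_2,n-i-2}\leq\len$ and $h_{c_1,i-1}+h_{c_2,n-i-2}\leq\h$, and the merge saves one sequence from $q_{c_1,i-1}+q_{c_2,n-i-2}+1$. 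The global test is then $q_{\mathrm{tot}}\leq\max(\numb)$ together with $c_1+c_2+1\leq\cU$, since $x_i\in P_k$ now.
\item If $v\leq\kv$, either $x_i$ is not covered (both $last$ and $first$ must be closed, $c_1+c_2\leq\cU$) or $x_i$ is absorbed as a neutral into a (possibly merged) sequence, which contributes one to the cost and to the relevant $h$-counters. The feasibility test becomes $l_{c_1,i-1}+1+l_{c_2,n-i-2}\leq\len$ and $h_{c_1,i-1}+1+h_{c_2,n-i-2}\leq\h$ (in the merge subcase) with $c_1+c_2+1\leq\cU$.
\end{itemize}
In each subcase the test is a constant-time predicate on $f_{c_1,i-1}$ and $b_{c_2,n-i-2}$.

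The main obstacle is the complexity step: trimming the naive $O(\cU^2)$ enumeration over $(c_1,c_2)$ down to $O(\cU)$ per variable, as in the $\FOCUSW$ proof. The clean argument is that, for fixed $c_1$, monotonicity of $(q,l)$ in the second argument of $b$ makes ``pick the largest $c_2$ satisfying the cost budget with $b_{c_2,n-i-2}$ non-dummy'' a dominating choice whenever the merge feasibility test involves only $(q,l)$. The subtle point is the $\h$-test on $h_{c_1,i-1}+h_{c_2,n-i-2}$ (plus~$1$ if $v\leq\kv$), since $h$-values are not monotonic in $c_2$. I would handle this by observing that, for a fixed $c_1$, it suffices to try a constant number of candidate $c_2$ values: the largest $c_2$ for which the ``merge'' test passes and the largest $c_2$ for which the ``no-merge'' (closed) test passes. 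Both can be located in amortized $O(1)$ per $c_1$ by a two-pointer sweep over $c_1$ (as $c_1$ grows, the threshold $c_2$ decreases monotonically because of the budget $c_1+c_2\leq\cU-\delta$). This gives $O(\cU)$ work per variable and $O(n\cU)=O(n\max(\C))$ overall, matching the bound claimed. Together with the correctness of the table (established by induction on $j$ exactly as in Lemma~\ref{l:disent}), this proves the lemma.
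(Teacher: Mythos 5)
Your overall route is the paper's: enrich each DP cell to a triple $\{q_{c,j},l_{c,j},h_{c,j}\}$, give $N_k$ variables the two options (absorb at cost one, incrementing the $h$-counter, or break), build forward and backward tables in $O(n\cU)$, and check supports by combining $f_{c_1,i-1}$ with $b_{c_2,n-i-2}$, reducing the pair enumeration from $O(\cU^2)$ to $O(\cU)$ per variable. Your case analysis for $v>\kv$ and $v\leq\kv$, including the gluing tests on lengths and $h$-counters, matches the paper's appendix almost clause for clause.

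The gap is exactly at the point you flag as subtle, and your workaround does not close it. You correctly observe that the reduction to $O(\cU)$ candidates hinges on monotonicity, and that $h_{c,j}$ is not obviously monotonic in $c$; but your fix --- try only ``the largest $c_2$ for which the merge test passes'' and locate it by a two-pointer sweep --- presupposes that the set of merge-feasible $c_2$ is an interval (so that its maximum is found by a monotone threshold). If $h$ were genuinely non-monotonic, that set could have holes, the sweep would not locate its maximum in amortized $O(1)$, and you have not argued that the budget-maximal $c_2$ dominates with respect to the $\h$-test. The missing idea is the monotonicity observation the paper states just before the lemma: for two optimal sets $S_{c,j}$ and $S_{c',j}$ whose last sequences are finite, both last sequences \emph{end at the same index} $j$, so the shorter one is contained in the longer one and therefore covers no more neutral variables; hence $l_{c',j}\leq l_{c,j}$ implies $h_{c',j}\leq h_{c,j}$. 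With this, the $(q,l)$-ordering of Lemma~\ref{l:dp_prop_mon} controls the $h$-component for free, the merge-feasible set is well-behaved, and the $\FOCUSW$ argument of Lemma~\ref{l:focusw_bc} (for each $c_1$, test only the maximal budget-feasible non-dummy $c_2$) transfers verbatim. You should state and use that containment argument rather than the two-pointer construction.
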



$\FOCUSWH$ can be encoded using  the cost-$\REGULAR$ constraint. The 
automaton needse 3 counters to compute $\len, \numb$ and $\h$. Hence, the
time complexity of this encoding is $O(n^4)$.  This automaton
is non-deterministic as on seeing $v \leq \kv$, it 
either covers the variable
or interrupts the last sequence.
Unfortunately the non-deterministic cost-$\REGULAR$
is not implemented in any constraint solver to our
knowledge.
In contrast, our algorithm
takes just $O(n^2)$ time.
$\FOCUSWH$ can also be decomposed using the {\sc Gcc} constraint ~\cite{Regin96}. We define the following variables for all $ i \in [0,max(\numb)-1]$ and $j \in [0,n-1]$: 
$\Start_i$ the start of the $i$th sub-sequence. $D(\Start_i)=\{0,.., n+max(\numb)\}$; 
$\End_i$ the end of the $i$th sub-sequence. $D(\End_i)=\{0,.., n+max(\numb)\}$;
$\Taken_j$ the index of the subsequence in $S_X$ containing $x_j$. $D(\Taken_j)=\{0,.., max(\numb)\}$;
$\Zero_j$ the index of the subsequence in $S_X$ containing $x_j$ s.t. the value of $x_j$ is less than or equal to $k$. $D(\Zero_j)=\{0,.., max(\numb)\}$;
$\LastTaken$ the cardinality of $S_X$. $D(\LastTaken)=\{0,..,max(\numb)\}$; 
$Card$, a vector of $max(\numb)$ variables having $\{0,..,h\}$ as domains. 
{\small
\FOCUSWH($X,\numb, \len, \h, \kv, \C$) $\Leftrightarrow$
\begin{equation}
 \begin{split}
  (x_j \leq \kv) \vee \Zero_j=0; & \qquad  (x_j \leq \kv) \vee \Taken_j>0;\nonumber\\
  (x_j > \kv) \vee (\Taken_j=\Zero_j); & \qquad (\Taken_j \leq \LastTaken); \\
 (\Taken_j \neq i) \vee (j \geq \Start_{i-1});   & \qquad(\Taken_j \neq i) \vee (j\leq \End_{i-1});   \\
   (i>\LastTaken) \vee (\Taken_j = i) \vee & (j<\Start_{i-1})\vee (j>\End_{i-1}); \\
 \forall q \in [1,max(\numb)-1], &  \qquad q \geq \LastTaken \vee \Start_q> \End_{q-1};\\
 \forall q \in [0,max(\numb)-1]   &  \qquad q \geq \LastTaken \vee  \End_q\geq \Start_q; \\
  \forall q \in [0,max(\numb)-1]   &  \qquad  q \geq \LastTaken \vee len > (\End_q-\Start_q); \\
  \end{split}\end{equation}
\noindent
\vspace*{-0.4cm}
  \begin{equation}
    \begin{split}
 \LastTaken \leq \numb;  \qquad  {\sc Gcc} ([\Taken_0,..,\Taken_{n-1}],\{0\},[n- \C]); \\
  {\sc Gcc}([\Zero_0,..,\Zero_{n-1}], \{1,..,max(\numb)\}, Card)\nonumber; \\
  \end{split}
  \end{equation}
}
\vspace{-0.5cm}
\section{Experiments}\label{sec:exps}
We used the Choco-2.1.5 solver on an IntelXeon 2.27GHz for the first benchmarks and IntelXeon 3.20GHz for last ones, both under Linux. We compared the propagators (denoted by F) of $\FOCUSW$ and $\FOCUSWH$ against two decompositions (denoted by D$_1$ and D$_2$), using the same search strategies, on three different benchmarks. The first decomposition, restricted to $\FOCUSW$, is shown in proposition \ref{prop:decomp_focusw}, while the second one is shown in Section \ref{sec:focuswh}. 
In the
tables, we report for each set the total number of solved instances (\#n), then we average both the number of backtracks (\#b) and the resolution time (T) in seconds. 

\noindent $\Box$ \emph{Sports league scheduling (SLS).} We extend a single round-robin problem
with $n=2p$ teams. 
Each week each team plays a game either
at home or away. Each team plays exactly once all the other teams during a season.
We minimize the number of breaks (a break for one team is two consecutive home or two consecutive away games), while fixed weights in $\{0,1\}$ are
assigned to all games: games with weight 1
are important for TV channels. The goal is to group consecutive weeks where
at least one game is important (sum of weights $>0$), to increase the price of TV broadcast packages. Packages are 
limited to 5 weeks and should be as short as possible. Table~\ref{tab:SLS} shows results with 16 and 20 teams, 
on sets of 50 instances with 10 random important games and a limit of 400K backtracks. 
$\max(\numb)=3$ and we search for one solution with $\h \leq 7$ (instances $n$-1),
$h\leq6$ ($n$-2) and $h\leq5$ ($n$-3). 
In our model, inverse-channeling and {\sc AllDifferent} constraints  with the strongest propagation level express  
that each team plays once against each other team. 
We assign first the sum of breaks by team, then the breaks and places using the 
\emph{DomOverWDeg} strategy.
%
{
\begin{table}[t]
\caption{\label{tab:SLS}SLS with $\FOCUSW$ and its decomposition.}
\begin{scriptsize}
\renewcommand{\tabcolsep}{2.6pt}
\begin{center}
\begin{tabular}{|c|c|c|c|c|c|c|c|c|c|cl}
\hline
 & \multicolumn{ 3 }{c|}{16\_1} &  \multicolumn{ 3 }{c|}{16\_2} &  \multicolumn{ 3 }{c|}{16\_3} \\
\hline
\hline
 &\#n & \#b & T & \#n & \#b & T & \#n & \#b & T    \\
\hline
 F &  50 & 0.9K & 0 & {\bf 50} & 4.1K  & 2 & {\bf 47} & 18.1K & 7    \\
 D$_1$ &   50 & 3.4K& 1 & 49 & 8.1K & 3 &  44 & 21.8K& 8   \\
 \hline
\end{tabular}
\begin{tabular}{|c|c|c|c|c|c|c|c|c|c|cl}
\hline
&  \multicolumn{ 3 }{c|}{20\_1} &  \multicolumn{ 3 }{c|}{20\_2} &  \multicolumn{ 3 }{c|}{20\_3} \\
\hline
\hline
 &\#n & \#b & T & \#n & \#b & T & \#n & \#b & T    \\
 \hline
F & {\bf 49} & 11.8K &  7 & {\bf 45} & 24.9K & 14  & {\bf 39} & 36.5K & 23    \\
D$_1$ & 43 &30.8K & 13  & 35 & 27.2K  & 12  &  29  & 29.6K & 17 \\
 \hline
\end{tabular}
\end{center}
\end{scriptsize}
\end{table}
}

%

\noindent $\Box$ \emph{Cumulative Scheduling with Rentals.}
Given a horizon of $n$ days and a set of time intervals $[s_i,e_i]$, $i\in \{1,2,\ldots,p\}$,
a company needs to rent a machine between $l_i$ and $u_i$ times within each time interval $[s_i,e_i]$.
We assume that the cost of the rental period is proportional to its length.
On top of this, each time the machine is rented we pay a fixed cost.
The problem is then
defined as a conjunction of one \FOCUSWH($X,\numb, \len, \h, 0, \C$) with a set of \AMONG\ constraints.
The goal is to build a schedule for rentals that satisfies all demand constraints
and minimizes simultaneously the number of rental periods and their total length.
We build a Pareto frontier over two cost variables, as Figure~\ref{fig:pareto}
shows 
for one of the instances of this problem.
\begin{figure}[!h]
\begin{center}
  \includegraphics[width=0.45\textwidth]{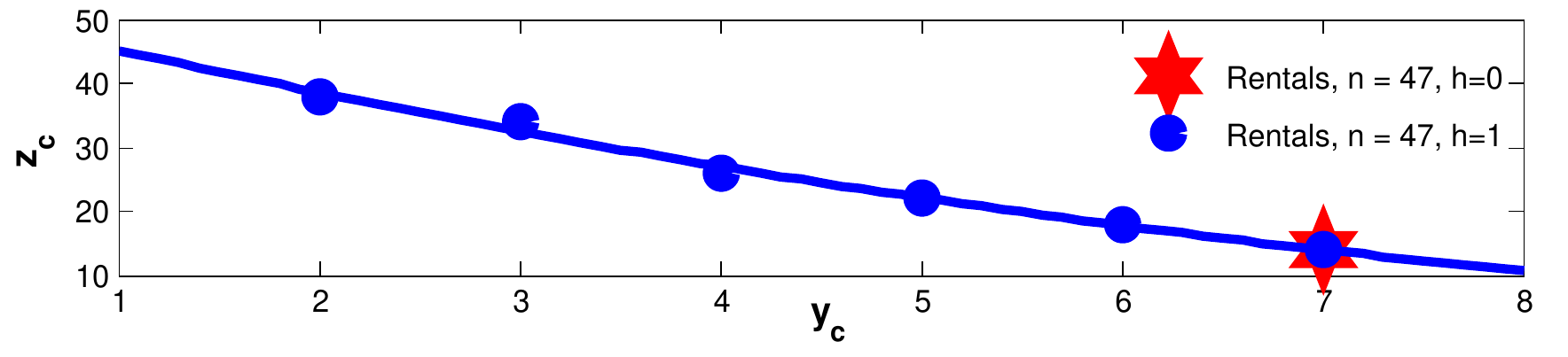}
\end{center}
\vspace{-0,55cm}
\caption{Pareto frontier for Scheduling with Rentals.}
  \label{fig:pareto}
\end{figure}
We generated instances having a fixed length of sub-sequences of size 20 (i.e., $len=20$), 50\% as a probability of posting an \emph{Among} constraint for each $(i,j)$ s.t. $j\geq i+5$ in the sequence. Each set of instances corresponds to a unique sequence size ($\{40,43,45,47,50\}$) and 20 different seeds. We summarize these tests in table \ref{tab:rental}. Results with decomposition are very poor. We therefore consider only the propagator in this case.

%
%

\begin{table}[t]
\caption{\label{tab:rental}Scheduling with Rentals.}
\begin{scriptsize}
\renewcommand{\tabcolsep}{2.7pt}
\begin{center}
\begin{tabular}{|c|c|c|c|c|c|c|c|c|c|c|}
\hline &  & \multicolumn{3}{c|}{40} & \multicolumn{3}{c|}{43} & \multicolumn{3}{c|}{ 45}  \\
\hline
\hline  h  & & \#n & \#b & T  & \#n & \#b & T  & \#n & \#b & T  \\
\hline  0 & F  &    20 & 349K& 55.4   &     20 & 1M &  192.2   &    20  &  1M &233.7  \\
0 & D$_1$  &   20 & 529K &74.7  &   20   & 1M & 251.2   &   20  &  1M & 328.6 \\
\hline 1 & F  &  20  & 827M & 120.4 &   20  &  2M & 420.9  &   19 & 3M & 545.9  \\
\hline 2 & F  &    20  &    826K &      115.7   &   20 & 2M &   427.3  &  19& 3M &   571.3   \\
\hline 
\end{tabular}

\begin{tabular}{|c|c|c|c|c|c|c|c|}
\hline &  & \multicolumn{3}{c|}{47} & \multicolumn{3}{c|}{50} \\
\hline
%
\hline  h  &    & \#n & \#b     & T      & \#n & \#b    & T \\ 
\hline  0 & F  &  {\bf 19} &   1M &  354.5   &    {\bf 18} &  2M &   553.7   \\
0 & D$_1$ &  18 & 2M & 396.8  &   17 &  3M & 660   \\
\hline 1 & F  &    16&  4M & 725.4  &   4 &   6M &   984.5  \\
\hline 2 & F  &    15 &  4M &  763.9 &   4 &  5M & 944.8  \\
\hline
\end{tabular}
\end{center}
\end{scriptsize}
\end{table}

\noindent $\Box$ \emph{Sorting Chords.}
We need to sort $n$ distinct chords. Each chord is a set of at most $p$ notes played simultaneously.
The goal  is to find an ordering that minimizes the number of notes changing between two consecutive chords.
The full description and a CP model is in~\cite{petit12}. The main
 difference here is that 
we build a Pareto frontier over two cost variables.
We generated 4 sets of instances distinguished by the numbers of chords ($\{14,16,18,20\}$). We fixed the length of the subsequences and the maximum notes for all the sets then change the seed for each instance.





{
\begin{table}[t]
\caption{\label{tab:SC}Sorting Chords}
\centering
\begin{scriptsize}
\renewcommand{\tabcolsep}{2.7pt}
\begin{tabular}{|c|c|c|c|c|c|c|c|c|c|c|c|c|c|}
\hline   &  &  \multicolumn{ 3 }{c|}{ 14} & \multicolumn{ 3 }{c|}{ 16} & \multicolumn{ 3 }{c|}{ 18} & \multicolumn{ 3 }{c|}{20} \\
\hline\hline  h&  &  \#n & \#b & T & \#n & \#b & T & \#n & \#b & T & \#n & \#b & T \\
\hline 0 & F    &    30& 70K & 2.8  &   30 &  865K  &  14.6   &    28 & 10M &    182.9   &   {\bf 16} & 14M &  270.4  \\
  0 & D$_1$  &    30 & 94K &   3.2  &    30 &   2M &   41 &    28 & 12M & 206.9  &    13 & 10M & 206.8  \\
 0 & D$_2$  &  30 &  848K & 34.9  &    24   & 3M &  122.3  &   13 &   8M &  285.6  &    7 &902K &  38.7  \\
\hline 1 & F  &  30 &   97K & 3.5   &    {\bf 30} & 1K &   27.2 &    {\bf 28} &  12K &  214.2  &     {\bf 14} &  13M & 288.2 \\
 1 & D$_2$  &    30 & 851M &  41.5   &   23 & 2M  &       116.3 &    11 &  5M  &    209.9  &    7&  868K  &   41.5  \\
\hline 2 & F  &     30 &   97K &  3.4  &   {\bf  30} & 1M &    25.9  &  {\bf 28} &13M &  217.4   &      {\bf 13} &  12M&  245.5   \\
 2 & D$_2$  &    30  &  844K  & 40.9  &   24 &3M  &    145.1  &     12  &  6K & 251.6 &      7 & 867K  &       42.8 \\
\hline
\end{tabular}
\end{scriptsize}
\end{table}
}

Tables \ref{tab:SLS}, \ref{tab:rental} and \ref{tab:SC} show that best results 
were obtained with our propagators (number of solved instances, average backtracks and CPU time over all the solved instances\footnote{While the technique that solves the largest number of instances (and thus some  harder ones) should be penalized.}).
Figure~\ref{fig:pareto} confirms the gain of flexibility illustrated by Figure~\ref{fig:SF} in Section~\ref{sec:focush}: allowing $\h = 1$ variable with a low cost value into each sequence leads to
new solutions, with significantly lower values for the target variable $\numb$.

\section{Conclusion}\label{sec:concl}
We have presented flexible tools for capturing the concept of concentrating costs. Our contribution highlights the expressive power of constraint programming,
in comparison with other paradigms where such a concept would be very difficult to represent. Our experiments have demonstrated the effectiveness of the proposed new filtering algorithms. 

\pagebreak
\bibliographystyle{named}
\bibliography{lit}

~\\~\\~\\~\\~\\~\\~\\~\\~\\~\\~\\~\\~\\~\\~\\~\\
\section*{Appendix}\label{sec:app}
~\\
\begin{appendix}
Proofs of Lemmas~\ref{lem:initialization} to \ref{lem:pgt} ommit the obvious cases where quantities take the default value $n+1$.
\section{Proof of Lemma~\ref{lem:initialization}}\label{app:lem:initialization}
From item 4 of Definition~\ref{def:SF}, a sequence in $S_X$ cannot start with a value $v \leq k$.
Thus, $\underline{p_{S}}(x_{0},v_\leq)=n+1$ and $\underline{\mathit{card}}(x_{0})$ $=$ $0$.
If $x_0$ can take a value $v > k$ then by Definition~\ref{def:recSF}, $\underline{p}(x_{0},v_>)$ $=$ $1$ and $\underline{\mathit{plen}}(x_{0})$ $=$ $1$. \qed
\section{Proof of Lemma~\ref{lem:pleq}}\label{app:lem:pleq}
 If $\min(x_l) \leq k$ then
$\underline{p_{S}}(x_{l-1},v_\leq)$ must not be considered: it would
imply that a sequence in $S_X$ ends by a value $v \leq k$ for $x_{l-1}$.
From Property~\ref{prop:fc}, the focus cardinality of the previous sequence is
$\min(\underline{p}(x_{l-1},v_\leq), \underline{p}(x_{l-1},v_>))$. \qed
\section{Proof of Lemma~\ref{lem:pleqs}}\label{app:lem:pleqs}
 If $\min(x_i) \leq k$ we have three cases to consider. (1) If  either $\underline{\mathit{plen}}(x_{i-1})=0$ or $\underline{\mathit{plen}}(x_{i-1})=\len$ then
from item 3 of Definition~\ref{def:SF} a sequence in $S_X$ cannot start with a value $v_i \leq k$: $\underline{p_S}(x_{i},v_\leq)$ $=$ $n+1$. (2) If $\underline{\mathit{plen}}(x_{i-1})=\len-1$ then from Defiinition~\ref{def:SF} the current variable $x_i$ cannot end the sequence
with a value $v_i \leq k$. (3) Otherwise, from item 3 of Definition~\ref{def:SF}, $\underline{p}(x_{i-1},v_\leq)$ is not considered. Thus, from Property~\ref{prop:fc}, $\underline{p_S}(x_{i},v_\leq)$ $=$ $\min(\underline{p_S}(x_{i-1},v_\leq), \underline{p}(x_{i-1},v_>))$.
\qed
\section{Proof of Lemma~\ref{lem:pgt}}\label{app:lem:pgt}
If  $\underline{\mathit{plen}}(x_{l-1})$ $\in$ $\{0, \len\}$ a new sequence has to be considered: $\underline{p_S}(x_{l-1},v_\leq)$ must not be considered, from item 3 of Definition~\ref{def:SF}.
Thus, $\underline{p}(x_{l},v_>)$ $=$ $\min(\underline{p}(x_{l-1},v_>)+1, \underline{p}(x_{l-1},v_\leq) +1)$. Otherwise, either a new sequence has to be considered ($ \underline{p}(x_{l-1},v_\leq) +1$)
or the value is equal to the focus cardinality of the previous sequence ending in $x_{l-1}$. \qed
\section{Proof of Lemma~\ref{l:dp_prop_mon}}\label{sec:app:l:dp_prop_mon}
\sloppy
First, we prove two technical results.
\begin{lemma}\label{l:dp_prop_conseq}
Consider  $\FOCUSW([x_0,\ldots,x_{n-1}],$ $ y_c, len,k, \C)$.
Let $f$ be dynamic programming table returned by Algorithm~\ref{a:disent}.
Then the non-dummy values of $f_{c,j}$ are consecutive  in each column,
so that there do not exist $c,c', c''$, $0 \leq c < c' < c'' \leq \cU$, such that
$f_{c',j}$ is dummy and $f_{c,j},f_{c'',j}$ are non-dummy.
\end{lemma}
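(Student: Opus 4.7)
The plan is to prove the lemma by induction on $j$, showing that for every column $j\in\{-1,0,\ldots,n-1\}$ the set $\{c : f_{c,j}\text{ is non-dummy}\}$ is a (possibly empty) consecutive range of integers $[a_j,b_j]$. The lemma is an immediate reformulation of this: a triple $c<c'<c''$ with $f_{c',j}$ dummy between two non-dummy entries would contradict consecutiveness.

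For the base case $j=-1$, the initialization in line~\ref{a:first_col} sets $f_{0,-1}=\{0,0\}$ and leaves every other entry of column $-1$ at the default $\{\infty,\infty\}$, so the non-dummy indices form the singleton $\{0\}$, trivially consecutive. For the inductive step, assume the non-dummy entries of column $j-1$ form $[a,b]$ and split on the label of $x_j$.

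If $x_j\in P_k$ or $x_j\in N_k$, the recurrences (lines~\ref{a:cond_P_k}--\ref{a:cond_2} and line~\ref{a:cond_5}) read $f_{c,j}$ only from $f_{c,j-1}$ at the same cost index. A short case check shows $f_{c,j}$ is non-dummy iff $f_{c,j-1}$ is non-dummy: in the $N_k$ branch this is immediate from $f_{c,j}=\{q_{c,j-1},\infty\}$; in the $P_k$ branch one verifies that if $q_{c,j-1}=\infty$ then the guard on line~\ref{a:cond_P_k} is entered and produces $\{\infty,\infty\}$, while if $q_{c,j-1}$ is finite then both sub-branches produce a finite first coordinate. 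Hence the non-dummy set of column $j$ is exactly $[a,b]$.

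The only substantive case is $x_j\in U_k$, where I would establish the equivalence
\[
  f_{c,j}\text{ non-dummy}\;\iff\;f_{c,j-1}\text{ non-dummy}\;\vee\;f_{c-1,j-1}\text{ non-dummy}.
\]
If $q_{c,j-1}$ is finite, either line~\ref{a:cond_3} copies $q_{c-1,j-1}$ (which is then finite since the guard requires $q_{c-1,j-1}=q_{c,j-1}$) or line~\ref{a:cond_4} produces $\{q_{c,j-1},\infty\}$; in both sub-branches $f_{c,j}$ is non-dummy. If $q_{c,j-1}=\infty$, the second disjunct of the guard on line~\ref{a:cond_U_k} fires and forces line~\ref{a:cond_3}, which yields $\{q_{c-1,j-1},l_{c-1,j-1}+1\}$; this is non-dummy exactly when $f_{c-1,j-1}$ is non-dummy, using the convention that $f_{c-1,j-1}$ dummy means both coordinates are $\infty$. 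Given the equivalence, the non-dummy set of column $j$ equals $[a,b]\cup[a+1,b+1]=[a,b+1]$ (intersected with $[0,\cU]$ if the outer bound caps the loop), which is again consecutive.

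The only delicate point is the $U_k$ step, where one has to be careful with the propagation of $\infty$ through $l_{c-1,j-1}+1$ and with the disjunctive guard on line~\ref{a:cond_U_k}. Once the four combinations of dummy/non-dummy for $f_{c,j-1}$ and $f_{c-1,j-1}$ are tabulated, the conclusion is the elementary fact that a union of two integer intervals shifted by one is still an interval, which closes the induction.
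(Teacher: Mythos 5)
Your proof is correct and follows essentially the same route as the paper's: induction on the column index with a three-way case split on whether $x_j$ is penalizing, undetermined, or neutral, using the same observations about which cells of column $j-1$ determine the dummy status of $f_{c,j}$. The only difference is presentational — you characterize the non-dummy set positively as an interval and close the $U_k$ case with a union of shifted intervals, whereas the paper assumes a sandwiched dummy entry and derives a contradiction from the induction hypothesis.
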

\begin{proof}
We prove by induction on the length of the sequence.
The base case is trivial as
$f_{0,-1} = \{0,0\}$ and $f_{c,-1} = \{\infty,\infty\}$, $c \in [-1] \cup [1,\cU]$.
Suppose the statement holds for $j-1$ variables.

Suppose there exist $c, c', c''$, $0 \leq c < c' < c'' \leq \cU$, such that
$f_{c',j}$ is dummy and $f_{c,j},f_{c'',j}$ are non-dummy.

\textbf{Case 1.} Consider the case $x_j \in P_k$.
By Algorithm~\ref{a:disent},  lines~\ref{a:cond_1} and~\ref{a:cond_2},
$q_{c,j} \in [q_{c,j-1},q_{c,j-1}+1]$,
$q_{c',j} \in [q_{c',j-1},q_{c',j-1}+1]$ and
$q_{c'',j} \in [q_{c'',j-1},q_{c'',j-1}+1]$.
As $f_{c',j}$ is dummy and $f_{c,j},f_{c'',j}$ are non-dummy,
$f_{c',j-1}$ must be dummy and $f_{c,j-1},f_{c'',j-1}$ must be non-dummy.
This violates  induction hypothesis.

\textbf{Case 2.} Consider the case $x_j \in U_k$.
By Algorithm~\ref{a:disent},  lines~\ref{a:cond_3} and~\ref{a:cond_4},
$q_{c,j} = min(q_{c-1,j-1},q_{c,j-1})$,
$q_{c',j} = min(q_{c'-1,j-1},q_{c',j-1})$ and
$q_{c'',j} = min(q_{c''-1,j-1},q_{c'',j-1})$.
As $f_{c',j}$ is dummy, then both
$f_{c'-1,j-1}$ and $f_{c',j-1}$ must be dummy.
As $f_{c,j}$ is non-dummy, then one of
$f_{c-1,j-1}$ and $f_{c,j-1}$ is non-dummy.
As $f_{c'',j}$ is non-dummy, then one of
$f_{c''-1,j-1}$ and $f_{c'',j-1}$ is non-dummy.
We know that $c-1<c \leq c'-1 < c' \leq c''-1 < c''$
or $c  < c'  < c''$.
This leads to violation of  induction hypothesis.

\textbf{Case 3.} Consider the case $x_j \in N_k$.
By Algorithm~\ref{a:disent},  line~\ref{a:cond_5},
$q_{c,j} = q_{c,j-1}$,
$q_{c',j} = q_{c',j-1}$ and
$q_{c'',j} = q_{c'',j-1}$.
Hence, $f_{c',j-1}$ is dummy and $f_{c,j-1},f_{c'',j-1}$ are non-dummy.
This leads to violation of  induction hypothesis.
\end{proof}

\begin{proposition}\label{l:dp_first_row}
Consider   $\FOCUSW([x_0,\ldots,x_{n-1}],$ $ y_c, len,k, \C)$.
Let $f$ be dynamic programming table returned by Algorithm~\ref{a:disent}.
The elements of the first row are non-dummy:
$f_{0,j}$, $j=-1,\ldots,n$ are non-dummy.
\end{proposition}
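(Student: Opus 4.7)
The plan is to proceed by induction on $j$, the column index of the dynamic programming table, showing that for cost parameter $c = 0$ the algorithm never falls into the dummy value $\{\infty,\infty\}$. The intuition is that $c = 0$ corresponds to a cover in which no undetermined variable is taken, so the only sequences present are runs of $P_k$ variables (possibly split to respect the length bound $\len$). Such a cover always exists for every prefix, so the first row should always carry finite information.

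For the base case $j = -1$, the algorithm explicitly sets $f_{0,-1} \gets \{0,0\}$ on line~\ref{a:first_col}, which is non-dummy by definition. For the inductive step, assume $f_{0,j-1}$ is non-dummy, i.e.\ $q_{0,j-1} < \infty$ (the value of $l_{0,j-1}$ may or may not be $\infty$). I would then split on the classification of $x_j$:
\begin{itemize}
\item If $x_j \in P_k$, the guard on line~\ref{a:cond_P_k} inspects whether $l_{0,j-1} \in [1,\len)$ or $q_{0,j-1} = \infty$. The second disjunct is ruled out by the induction hypothesis, so either $l_{0,j-1} \in [1,\len)$ holds and line~\ref{a:cond_1} assigns $f_{0,j} = \{q_{0,j-1},\, l_{0,j-1}+1\}$, or the guard fails and line~\ref{a:cond_2} assigns $f_{0,j} = \{q_{0,j-1}+1,\,1\}$. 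Both outputs have a finite first coordinate.
\item If $x_j \in U_k$, the algorithm looks one column up at $f_{c-1,j-1} = f_{-1,j-1}$, which is dummy by the convention that the $c=-1$ row is initialized to $\{\infty,\infty\}$. Hence the first disjunct of line~\ref{a:cond_U_k} is false (since $q_{-1,j-1} = \infty \neq q_{0,j-1}$) and the second disjunct $q_{0,j-1} = \infty$ is false by induction, so line~\ref{a:cond_4} fires and yields $f_{0,j} = \{q_{0,j-1},\infty\}$, which is non-dummy because $q_{0,j-1} < \infty$.
\item If $x_j \in N_k$, line~\ref{a:cond_5} unconditionally copies $f_{0,j} = \{q_{0,j-1},\infty\}$, again non-dummy.
\end{itemize}

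The step I expect to be most delicate is the $U_k$ case, since it is the only branch where the algorithm reaches \emph{up} to the nonexistent row $c-1 = -1$. The verification relies on the convention, already fixed in the text just before Algorithm~\ref{a:disent}, that this auxiliary row is entirely dummy, combined with the fact that both disjuncts of the guard on line~\ref{a:cond_U_k} then necessarily fail. Once this observation is in place, the induction closes immediately in all three cases, proving the proposition.
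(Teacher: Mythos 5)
Your proof is correct and follows essentially the same route as the paper's: induction on $j$ with a three-way case split on the classification of $x_j$, using the convention that the auxiliary row $c=-1$ is entirely dummy to show the guard in the $U_k$ branch must fail so that the non-dummy fallback assignment fires. No gaps; the argument matches the paper's own proof step for step.
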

\begin{proof}
We prove by induction on the length of the sequence.
The base case is trivial as $f_{0,-1} = \{0,0\}$.
Suppose the statement holds for $j-1$ variables.

\textbf{Case 1.} Consider the case $x_j \in P_k$.
As $f_{0,j-1}$ is non-dummy then by Algorithm~\ref{a:disent},  lines~\ref{a:cond_1}--~\ref{a:cond_2},
$f_{0,j}$ is non-dummy.

\textbf{Case 2.} Consider the case $x_j \in U_k$.
Consider the condition
$(l_{-1,j-1} \in [1,len) \wedge q_{-1,j-1} = q_{0,j-1}) \vee (q_{0,j-1} = \infty)$
at line~\ref{a:cond_U_k}.
By the induction hypothesis, $q_{0,j-1} \neq \infty$. By the initialization
procedure of the dummy row, $q_{-1,j-1} = \infty$. Hence, this condition
does not hold and, by line~\ref{a:cond_4}, $f_{0,j}$ is non-dummy.

\textbf{Case 3.} Consider the case $x_j \in N_k$.
As $f_{0,j-1}$ is non-dummy then by Algorithm~\ref{a:disent},  lines~\ref{a:cond_5},
$f_{0,j}$ is non-dummy.
\end{proof}

We can now prove~Lemma~\ref{l:dp_prop_mon}.
\begin{proof}
By transitivity and consecutivity of non-dummy values (Lemma~\ref{l:dp_prop_conseq})
and the result that all elements in the $0$th row are non-dummy (Proposition~\ref{l:dp_first_row}),
it is sufficient to consider the case $c'=c+1$.

We prove by induction on the length of the sequence.
The base case is trivial as
$f_{0,-1} = \{0,0\}$ and $f_{c,0}$ are dummy, $c \in [0,\cU]$.
Suppose the statement holds for $j-1$ variables.

Consider the variable $x_j$. Suppose, by contradiction, that $f_{c,j} < f_{c+1,j}$.
Then either  $q_{c,j} < q_{c+1,j}$ or $q_{c,j} = q_{c+1,j}, l_{c,j} < l_{c+1,j}$.
By induction hypothesis, we know that $f_{c,j-1} \geq f_{c+1,j-1}$, hence,
either  $q_{c,j-1} > q_{c+1,j-1}$ or $q_{c,j-1} = q_{c+1,j-1}, l_{c,j-1} \geq l_{c+1,j-1}$.

We consider three cases depending on whether $x_j$ is a penalizing variable,
an undetermined variable or a neutral variable.

\textbf{Case 1.} Consider the case $x_j \in P_k$.
If $q_{c,j-1} = \infty$ then $q_{c+1,j-1} = \infty$ by the induction hypothesis.
Hence, by Algorithm~\ref{a:disent},  line~\ref{a:cond_1},
$f_{c,j}$ and $f_{c+1,j}$ are dummy and equal.
Suppose $q_{c,j-1} \neq \infty$. Then  we consider four cases based on relative values of
$q_{c,j'},q_{c+1,j'},l_{c,j'},l_{c+1,j'}$, $j' \in\{j-1,j\}$.
\begin{itemize}
  \item
\textbf{Case 1a.} Suppose $q_{c,j} < q_{c+1,j}$ and $q_{c,j-1} > q_{c+1,j-1}$.
By Algorithm~\ref{a:disent},  lines~\ref{a:cond_1} and~\ref{a:cond_2},  $q_{c,j} \geq q_{c,j-1}$
and $q_{c+1,j} \leq q_{c+1,j-1}+1$. Hence,
$q_{c,j} < q_{c+1,j}$ implies $\mathbf{q_{c+1,j-1}} < q_{c,j} <  \mathbf{q_{c+1,j-1}+1}$.
We derive a contradiction.
  \item \textbf{Case 1b.} Suppose $q_{c,j} < q_{c+1,j}$ and $q_{c,j-1} = q_{c+1,j-1}, l_{c,j-1} \geq l_{c+1,j-1}$.
By Algorithm~\ref{a:disent},  lines~\ref{a:cond_1} and~\ref{a:cond_2},  $q_{c,j} \geq q_{c,j-1}$
and $q_{c+1,j} \leq q_{c+1,j-1}+1$.  Hence,
$q_{c,j} < q_{c+1,j}$ implies $\mathbf{q_{c+1,j-1}} = q_{c,j-1} \leq q_{c,j} <  q_{c+1,j} \leq \mathbf{q_{c+1,j-1}+1}$.
Hence, $q_{c+1,j-1} = q_{c,j-1} = q_{c,j}$  and $q_{c+1,j} ={q_{c+1,j-1}+1}$.
As $q_{c,j-1} = q_{c,j}$ then $l_{c,j-1} \in [1,len)$ by Algorithm~\ref{a:disent} line~\ref{a:cond_1}.
As $q_{c+1,j} = q_{c+1,j-1}+1$ then $l_{c+1,j-1} \in \{len,\infty\}$ by Algorithm~\ref{a:disent} line~\ref{a:cond_2}.
This leads to a contradiction as $l_{c,j-1} \geq l_{c+1,j-1}$.

 \item \textbf{Case 1c.} Suppose $q_{c,j} = q_{c+1,j}, l_{c,j} < l_{c+1,j}$ and $q_{c,j-1} > q_{c+1,j-1}$.
Symmetric to Case 1b.
  \item
\textbf{Case 1d.}
Suppose  $q_{c,j} = q_{c+1,j}, l_{c,j} < l_{c+1,j}$ and $q_{c,j-1} = q_{c+1,j-1}, l_{c,j-1} \geq l_{c+1,j-1}$.
By Algorithm~\ref{a:disent},  lines~\ref{a:cond_1} and~\ref{a:cond_2},  $q_{c,j} \geq q_{c,j-1}$
and $q_{c+1,j} \leq q_{c+1,j-1}+1$.  Hence,
$q_{c,j} = q_{c+1,j}$ implies $\mathbf{q_{c+1,j-1}} = q_{c,j-1} \leq q_{c,j} =  q_{c+1,j} \leq \mathbf{q_{c+1,j-1}+1}$.
Therefore, either  $q_{c,j} = q_{c,j-1}\ \wedge\ q_{c+1,j} = q_{c+1,j-1}$ or
$q_{c,j} = q_{c,j-1}+1\ \wedge\ q_{c+1,j} = q_{c+1,j-1}+1$.

If $q_{c,j} = q_{c,j-1}$ and $q_{c+1,j} = q_{c+1,j-1}$ then
$l_{c,j-1} \in [1,len)$ and $l_{c+1,j-1} \in [1,len)$ by Algorithm~\ref{a:disent} line~\ref{a:cond_1}.
Hence, $l_{c,j} = l_{c,j-1}+1$ and  $l_{c+1,j} = l_{c+1,j-1}+1$.
As $l_{c,j-1} \geq l_{c+1,j-1}$, then  $l_{c,j} \geq l_{c+1,j}$.
This leads to a contradiction with the assumption $l_{c,j} < l_{c+1,j}$.

If $q_{c,j} = q_{c,j-1}+1\ \wedge\ q_{c+1,j} = q_{c+1,j-1}+1$  then
$l_{c,j-1} \in \{len,\infty\}$ and $l_{c+1,j-1}  \in \{len,\infty\}$
by Algorithm~\ref{a:disent} line~\ref{a:cond_2}.
Hence, $l_{c,j} = 1$ and  $l_{c+1,j} = 1$. This leads to a contradiction
with  the assumption $l_{c,j} < l_{c+1,j}$.
\end{itemize}

\textbf{Case 2.} Consider the case $x_j \in U_k$.
If $q_{c,j-1} = \infty$ then $q_{c+1,j-1} = \infty$ by the induction hypothesis.
Hence, by Algorithm~\ref{a:disent},  line~\ref{a:cond_3},
$f_{c,j}$ and $f_{c+1,j}$ are dummy and equal.

Suppose $q_{c,j-1} \neq \infty$. Then we consider four cases based on relative values of
$q_{c,j'},q_{c+1,j'},l_{c,j'},l_{c+1,j'}$, $j' \in\{j-1,j\}$.

\begin{itemize}
  \item
\textbf{Case 2a} Suppose $q_{c,j} < q_{c+1,j}$ and $q_{c,j-1} > q_{c+1,j-1}$.
By Algorithm~\ref{a:disent},  lines~\ref{a:cond_3} and~\ref{a:cond_4},  we know that
$q_{c+1,j-1} \leq q_{c+1,j} \leq q_{c,j-1}$
and
$q_{c,j-1} \leq q_{c,j} \leq q_{c-1,j-1}$.
By induction hypothesis,  $q_{c+1,j-1} \leq q_{c,j-1} \leq q_{c-1,j-1}$.
Hence, if $ q_{c,j} \leq q_{c+1,j}$ then
$\mathbf{q_{c,j-1}} \leq q_{c,j} \leq q_{c+1,j} \leq  \mathbf{q_{c,j-1}}$. Therefore,
if $q_{c,j} < q_{c+1,j}$ then we derive a contradiction.

\item \textbf{Case 2b.} Identical to Case 2b.


  \item \textbf{Case 2c.} Suppose $q_{c,j} = q_{c+1,j}, l_{c,j} > l_{c+1,j}$ and $q_{c,j-1} > q_{c+1,j-1}$.
As  $q_{c,j-1} \neq q_{c+1,j-1}$ then $q_{c+1,j-1} = q_{c+1,j}$ ( line~\ref{a:cond_3}).
We also know $\mathbf{q_{c,j-1}} \leq q_{c,j} \leq q_{c+1,j} \leq \mathbf{ q_{c,j-1}}$ from Case 1a.
Putting everything together, we get $\mathbf{q_{c,j-1}} \leq q_{c,j} \leq q_{c+1,j-1} < \mathbf{q_{c,j-1}}$.
This leads to a contradiction.
  \item
\textbf{Case 2d.} Suppose  $q_{c,j} = q_{c+1,j}, l_{c,j} < l_{c+1,j}$ and $q_{c,j-1} = q_{c+1,j-1}, l_{c,j-1} \geq l_{c+1,j-1}$.
As we know from Case 1a
$q_{c+1,j-1} \leq q_{c+1,j} \leq q_{c,j-1}$, $q_{c,j-1} \leq q_{c,j} \leq q_{c-1,j-1}$ and
$\mathbf{q_{c,j-1}} \leq q_{c,j} \leq q_{c+1,j} \leq  \mathbf{q_{c,j-1}}$.
Hence, $q_{c+1,j-1}  = q_{c+1,j} = q_{c,j-1} = q_{c,j}$.

Consider two subcases.  Suppose $q_{c,j-1} < q_{c-1,j-1}$.
Then $l_{c,j} = \infty$  (line~\ref{a:cond_4}). Hence,
our assumption $l_{c,j} < l_{c+1,j}$ is false.

Suppose $q_{c,j-1} = q_{c-1,j-1}$. If $l_{c-1,j-1} = len$
then $l_{c,j} = \infty$  (line~\ref{a:cond_4}).
Hence, our assumption $l_{c,j} < l_{c+1,j}$ is false.
Therefore, $l_{c-1,j-1} \in [1,len)$ and
$l_{c,j-1} = l_{c-1,j-1}+ 1$.
By induction hypothesis as $q_{c+1,j-1} = q_{c,j-1} = q_{c-1,j-1}$
then $l_{c+1,j-1} \leq l_{c,j-1} \leq l_{c-1,j-1}$.
Hence,  $l_{c,j-1} \in [1,l_{c-1,j-1}] \subseteq [1,len)$.
Therefore, $l_{c+1,j} = l_{c,j-1} + 1 \leq l_{c-1,j-1}+ 1=l_{c,j-1}$.
This contradicts our assumption  $l_{c,j} < l_{c+1,j}$.

\textbf{Case 3.} Consider the case $x_j \in N_k$.
This case follows immediately from Algorithm~\ref{a:disent},  line~\ref{a:cond_5},
and the induction hypothesis.
\end{itemize}
\end{proof}

\section{Proof of Lemma~\ref{l:disent}}\label{sec:app:l:disent}

\begin{proof}[Proof of correctness.]
We prove by induction on the length of the sequence.
Given $f_{c,j}$ we can reconstruct a corresponding set
of sequences $S_{c,j}$ by traversing the table backward.

The base case is trivial as $x_1 \in P_k$, $f_{0,0} = \{1,1\}$ and $f_{c,0} = \{\infty,\infty\}$.
Suppose the statement holds for $j-1$ variables.

\textbf{Case 1.} Consider the case $x_j \in P_k$.  Note, that the cost can not be increased
on seeing $x_j \in P_k$ as cost only depends on  covered undetermined variables.
By the induction hypothesis, $S_{c,j-1}$ satisfies conditions~\ref{cond:cover}--\ref{cond:last}.
The only way to obtain $S_{c,j}$ from $S_{c',j-1}$, $c'\in[0,\cU]$,
is to extend  $last(S_{c,j-1})$ to cover $x_j$ or start a new sequence
if $|last(S_{c,j-1})| = len$. If $S_{c,j-1}$ does not exist then
$S_{c,j}$ does not exist.
The algorithm performs this extension (lines~\ref{a:cond_1} and~\ref{a:cond_2}).
Hence, $S_{c,j}$ satisfies conditions~\ref{cond:cover}--\ref{cond:last}.

\textbf{Case 2.} Consider the case $x_j \in U_k$.
In this case, there exist two options to obtain $S_{c,j}$
from from $S_{c',j-1}$, $c'\in[0,\cU]$.

The first option is to cover $x_j$. Hence, we need to extend $last(S_{c-1,j-1})$.
Note that we should not start a new sequence if $last(S_{c-1,j-1}) = len$
as it is never optimal to start a sequence on seeing a neutral variable.

The second option is not to cover $x_j$. Hence, we need to interrupt $last(S_{c,j-1})$.

By Lemma~\ref{l:dp_prop_mon} we know that $f_{c,j-1} \leq f_{c-1,j-1}$, $0 < c \leq C$.
By the induction hypothesis, $S_{c,j-1}$ and $S_{c-1,j-1} $ satisfy conditions~\ref{cond:cover}--\ref{cond:last}.
Hence, $S_{c,j-1} \leq S_{c-1,j-1}$.

Consider two cases. Suppose $|S_{c,j-1}| < |S_{c-1,j-1}|$.
In this case, it is optimal to interrupt $last(S_{c,j-1})$.

Suppose $|S_{c,j-1}| = |S_{c-1,j-1}|$ and $|last(S_{c,j-1})| \leq |last(S_{c-1,j-1})|$.
If $|last(S_{c-1,j-1})| < len$ then  it is optimal to extend $last(S_{c-1,j-1})$.
If  $|last(S_{c-1,j-1})| = len$ then  it is optimal to interrupt $last(S_{c,j-1})$,
otherwise we would have to start a new sequence to cover an undetermined variable $x_j$, which is never optimal.
If $S_{c,j-1}$ and $S_{c-1,j-1}$ do not exist then
$S_{c,j}$ does not exist.
If $S_{c,j-1}$ does not exist then case analysis is similar
to the analysis above.

This case-based analysis is exactly what Algorithm~\ref{a:disent} does in lines~\ref{a:cond_3}
and~\ref{a:cond_4}.
Hence, $S_{c,j}$ satisfies conditions~\ref{cond:cover}--\ref{cond:last}.

\textbf{Case 3.} Consider the case $x_j \in N_k$.  Note that the cost can not be increased
on seeing $x_j \in N_k$ as cost only depends on  covered undetermined variables.
By the induction hypothesis, $S_{c,j-1}$ satisfies conditions~\ref{cond:cover}--\ref{cond:last}.
The only way to obtain $S_{c,j}$ from $S_{c',j-1}$, $c'\in[0,\cU]$,
is to interrupt $last(S_{c,j-1})$. If $S_{c,j-1}$ does not exist then
$S_{c,j}$ does not exist.
The algorithm performs this extension in line~\ref{a:cond_5}.
Hence, $S_{c,j}$ satisfies conditions~\ref{cond:cover}--\ref{cond:last}.
\end{proof}
\begin{proof}[Proof of complexity.]
The time complexity of the algorithm is $O(n\max(\C)) = O(n^2)$ as we have $O(n\max(\C))$ elements in the table
and we only need to inspect a constant number of elements to compute $f(c,j)$.
\end{proof}

\section{Proof of Lemma~\ref{l:focuswh_bc}}\label{sec:app:focuswh_bc}
First, we present explicitly the algorithm for detecting disentailment.
\begin{algorithm}
\label{a:disent_focuswh}
{\scriptsize
  \caption{$\FOCUSWH$($x_0, \ldots, x_{n-1}$)}
  \For{$c \in -1..\cU$}{ \label{a:for_loop_dummy_focuswh}
      \For{$j \in -1..n-1$}{
         $f_{c,j} \gets \{\infty,\infty,\infty\}$\;
      }
  }
  $f_{0,-1} \gets \{0,0,0\}$
  \;
  \label{a:first_col_focuswh}
  \For{$j \in 0..n-1$}{ \label{a:for_loop_focuswh}
      \For{$c \in 0..j$}{
        \If(\tcc*[f]{penalizing}){$ x_j \in P_k$}{
            \uIf
            {$(l_{c,j-1} \in [1,len)) \vee (q_{c,j-1} = \infty)$}{\label{a:cond_P_k_focuswh}
                 $f_{c,j} \gets  \{q_{c,j-1}, l_{c,j-1} + 1, h_{c,j-1}\}$\; \label{a:cond_1_focuswh}
            }
              \Else
              {
                 $f_{c,j} \gets   \{q_{c,j-1}+1, 1, 0\}$\; \label{a:cond_2_focuswh}
              }
        }				
        \If(\tcc*[f]{undetermined }){$ x_j \in U_k$} {
            \eIf{$(l_{c-1,j-1} \in [1,len) \wedge q_{c-1,j-1} = q_{c,j-1}) \vee (q_{c,j-1} = \infty)$}{\label{a:cond_U_k_focuswh}
                  $f_{c,j} \gets  \{q_{c-1,j-1}, l_{c-1,j-1} + 1, h_{c-1,j-1}\}$
                  \label{a:cond_3_focuswh}
            }{
                   $f_{c,j} \gets   \{q_{c,j-1}, \infty, \infty\} $   
                   \label{a:cond_4_focuswh}
            }
        }

        \If(\tcc*[f]{ neutral }){$ x_j \in N_k$}{
             \eIf{$(l_{c-1,j-1} \in [1,len) \wedge  h_{c-1,j-1} \in [1,h) \wedge q_{c-1,j-1} = q_{c,j-1}) \vee (q_{c,j-1} = \infty)$}{\label{a:cond_U_k_focuswh}
                  $f_{c,j} \gets  \{q_{c-1,j-1}, l_{c-1,j-1} + 1, h_{c-1,j-1}+1\}$
                  \label{a:cond_5_focuswh}
            }{
                   $f_{c,j} \gets   \{q_{c,j-1}, \infty, \infty\} $   
                   \label{a:cond_6_focuswh}
            }
        }
}
  }
 return $f$\;
 }
\end{algorithm}

The algorithm is based on a dynamic program. Each cell in the dynamic programming table $f_{c,j}$, $c \in [0, \cU]$, $j \in \{0,1,\ldots,n-1\}$, where $\cU = max(\C) - |P_k|$, is a triple of values $q_{c,j}$, $l_{c,j}$ and $h_{c,j}$,  $f_{c,j} =\{q_{c,j}, l_{c,j}, h_{c,j}\}$, stores information about $S_{c,j}$. The new parameter $h_{c,j}$ stores the number of neutral variables covered by $last(S_{c,j})$. Algorithm~\ref{a:disent_focuswh} shows pseudocode of the algorithm.

The intuition behind the algorithm is as follows. We emphasize again that by cost of a cover we mean the number of covered undetermined and neutral variables.
\begin{itemize}
  \item If $x_j \in P_k$ then we do \textbf{not} increase the cost of  $S_{c,j}$ compared to $S_{c,j-1}$ as the cost only depends on $x_j \in U_k$. Hence, the best move for us is to extend $last(S_{c,j-1})$ or  we start a new sequence if it is possible. This is encoded in lines~\ref{a:cond_1_focuswh} and~\ref{a:cond_2_focuswh} of the algorithm.
  \item If $x_j \in U_k$ then we have two options. We can obtain $S_{c,j}$ from $S_{c-1,j-1}$ by increasing
$cst(S_{c-1,j-1})$  by one. This means that $x_i$ will be covered by $last(S_{c,j})$. Note that
this does not increase the number of covered neutral variables by $last(S_{c,j})$ as we can always set $x_j = v$, $v > k$.  Alternatively, from  $S_{c,j-1}$ by interrupting $last(S_{c,j-1})$ if necessary. This is encoded in lines~\ref{a:cond_3_focuswh} and~\ref{a:cond_4_focuswh} of the algorithm.
  \item If $x_j \in N_k$  then we have two options. We can obtain $S_{c,j}$ from $S_{c-1,j-1}$ by increasing
$cst(S_{c-1,j-1})$  by one and increasing the number of covered neutral variables by $last(S_{c,j-1})$.
Alternatively, from  $S_{c,j-1}$ by interrupting $last(S_{c,j-1})$  (lines~\ref{a:cond_5_focuswh}--~\ref{a:cond_6_focuswh}).
\end{itemize}
The proof of correctness mimics the corresponding proof for the $\FOCUSW$ constraint.
We can now prove Lemma~\ref{l:focuswh_bc}. 
%
\begin{proof}
The main idea is identical to the proof of the $\FOCUSW$ constraint.
We only highlight the differences between
the $\FOCUSW$ constraint and the $\FOCUSWH$ constraint.

Consider  a variable-value pair $x_i=v$, $v > k$.
The only difference is in the fourth option.
We denote $h(s_{i,j})$  the number of neutral variables covered by $s_{i,j}$.
Similarly, $h(S)= \sum_{s_{i,j \in S}} h(s_{i,j})$.
\begin{itemize}
\item  The fourth and the cheapest option is to glue
$last(S_{c_1,i-1})$, $x_v$ and $last(S_{c_2,n-i-2})$ to a single sequence if
$|last(S_{c_1,i-1})| +|last(S_{c_2,n-i-2})| < len$
and $h(last(S_{c_1,i-1})) +h(last(S_{c_2,n-i-2})) \leq h$.
Hence, $S_{c_1,i-1}' = S_{c_1,i-1} \setminus last(S_{c_1,i-1})$,
$S_{c_2,n-i-2}' = S_{c_2,n-i-2} \setminus last(S_{c_2,n-i-2})$
and $s' $ is a concatenation of $last(S_{c_1,i-1}), x=v$ and $last(S_{c_2,n-i-2})]$.
Then the union $S = S_{c_1,i-1}' \cup S_{c_2,n-i-2}' \cup \{s'\}$
forms a cover: $cst(S) = c_1+c_2+1$, $|S| = |S_{c_1,i-1}| +  |S_{c_2,n-i-2}| - 1$
and $h(S) = h(last(S_{c_1,i-1})) +h(last(S_{c_2,n-i-2}))$.
\end{itemize}
The rest of the proof is analogous to $\FOCUSW$.

Consider  a variable-value pair $x_i=v$, $v \leq k$.
The main difference is that we have the second option to build a support.
Namely, we glue $S_{c_1,i-1}$, $x_i$ and $ S_{c_2,n-i-2}$.
Hence, if $ c_1+c_2+1 \leq \cU$, $|last(S_{c_1,i-1})| +|last(S_{c_2,n-i-2})| < len$
and $h(last(S_{c_1,i-1})) +h(last(S_{c_2,n-i-2})) < h$ then
we can build a support for $x_i=v$.
The rest of the proof is analogous to $\FOCUSW$. 
\end{proof}
\end{appendix}
\end{document}